\newcommand{\bw}{\mathbf{w}}
\newcommand{\bx}{\mathbf{x}}
\newcommand{\by}{\mathbf{y}}
\newcommand{\1}{\mathbf{1}}
\newcommand{\bpsi}{\bm{\psi}}
\newcommand{\bxi}{\bm{\xi}}
\newcommand{\bPsi}{{\rm \bm{\Psi}}}
\newcommand{\bSigma}{{\rm \bm{\Sigma}}}
\newcommand{\btheta}{{\rm \bm{\theta}}}
\newcommand{\bH}{{\mathbf{H}}}
\newcommand{\bF}{{\mathbf{F}}}
\newcommand{\increment}{\Delta}
\newcommand{\argmin}{\mathop{\arg\!\min}}
\newtheorem{lemma}{Lemma}
\title{PAC-Bayes Information Bottleneck}
\author{Zifeng Wang  \thanks{Correspondence at \texttt{\url{zifengw2@illinois.edu}}}  \\
UIUC\\
\And Shao-Lun Huang\\
  Tsinghua University \\
  \And Ercan E. Kuruoglu \\
  Tsinghua University \\
  \AND   Jimeng Sun\\
  UIUC\\
 \And Xi Chen \\
 Tencent\\
 \And Yefeng Zheng \\
 Tencent
}
\begin{document}

\maketitle

\begin{abstract}
Understanding the source of the superior generalization ability of NNs remains one of the most important problems in ML research. There have been a series of theoretical works trying to derive non-vacuous bounds for NNs. Recently, the compression of information stored in weights (IIW) is proved to play a key role in NNs generalization based on the PAC-Bayes theorem. However, no solution of IIW has ever been provided, which builds a barrier for further investigation of the IIW's property and its potential in practical deep learning. In this paper, we propose an algorithm for the efficient approximation of IIW. Then, we build an IIW-based information bottleneck on the trade-off between accuracy and information complexity of NNs, namely PIB. From PIB, we can empirically identify the fitting to compressing phase transition during NNs' training and the concrete connection between the IIW compression and the generalization. Besides, we verify that IIW is able to explain NNs in broad cases, e.g., varying batch sizes, over-parameterization, and noisy labels. Moreover, we propose an MCMC-based algorithm to sample from the optimal weight posterior characterized by PIB, which fulfills the potential of IIW in enhancing NNs in practice.
\end{abstract}


\section{Introduction}
Understanding the behavior of neural networks (NNs) learned through stochastic gradient descent (SGD) is a prerequisite to revealing the source of NN's generalization in practice. Information bottleneck (IB) \citep{tishby2000information} was a promising candidate to reveal the principle of NNs through the lens of information stored in encoded representations of inputs. Drawn from the conception of representation \emph{minimality} and \emph{sufficiency} in information theory, IB describes the objective of NNs as a trade-off, where NNs are abstracted by a Markov chain $Y \leftrightarrow X \leftrightarrow T$, as
\begin{equation}
    \max_T I(T;Y) - \beta I(T;X),
\end{equation}
where $I(T;X)$ and $I(T;Y)$ are the mutual information of representation $T$ towards inputs $X$ and labels $Y$, respectively. IB theory claimed \citep{tishby2015deep} and then empirically corroborated \citep{shwartz2017opening} that NNs trained by plain cross entropy loss and SGD all confront an initial fitting phase and a subsequent compression phase. It also implied that the representation compression is a causal effect to the good generalization capability of NNs. IB theorem points out the importance of representation compression and ignites a series of follow-ups to propose new learning algorithms on IB to explicitly take compression into account \citep{burgess2018understanding,achille2018information,dai2018compressing,li2019specializing,achille2018emergence,kolchinsky2019nonlinear,wu2020graph,pan2020disentangled,goyal2018infobot,wang2019deep,wang2020information}.

However, recent critics challenged the universality of the above claims. At first, \citet{saxe2019information} argued that the representation compression phase only appears when \emph{double-sided saturating nonlinearities} like \texttt{tanh} and \texttt{sigmoid} are deployed. The boundary between the two phases fades away with other nonlinearities, e.g., \texttt{ReLU}. 
Second, the claimed causality between compression and generalization was also questioned \citep{saxe2019information, goldfeld2019estimating}, i.e., sometime networks that do not compress still generalize well, and vice versa. To alleviate this issue, \citet{goldfeld2019estimating} proposed that the clustering of hidden representations concurrently occurs with good generalization ability. However, this new proposal still lacks solid theoretical guarantee. Third, mutual information becomes trivial in deterministic cases \citep{shwartz2020information}. Other problems encountered in deterministic cases were pointed out by \citet{kolchinsky2018caveats}. Although several techniques \citep{shwartz2017opening,goldfeld2019estimating}, e.g., binning and adding noise, are adopted to make stochastic approximation for the information term, they might either violate the principle of IB or be contradictory to the high performance of NNs. Motivated by these developments, we focus on the following questions: 
\begin{itemize}[leftmargin=*, itemsep=0pt, labelsep=5pt]
    \item Does a universal two-phase training behavior of NNs exist in practice? If this claim is invalid with the previous information measure $I(T;X)$, can we achieve this two-phase property through another information-theoretic perspective?
    \item As $I(T;X)$ was unable to fully explain NN's generalization, can we find another measure with theoretical generalization guarantee? Also, how do we leverage it to amend the IB theory for deep neural network?
    \item How do we utilize our new IB for efficient training and inference of NNs in practice?
\end{itemize}

In this work, we propose to handle the above questions through the lens of information stored in weights (IIW), i.e., $I(\bw;S)$ where $S=\{X_i,Y_i\}_{i=1}^n$ is a finite-sample dataset. Our main contributions are four-fold: (1) we propose a new information bottleneck under the umbrella of PAC-Bayes generalization guarantee, namely \textbf{P}AC-Bayes \textbf{I}nformation \textbf{B}ottleneck (PIB); (2) we derive an approximation of the intractable IIW; (3) we design a Bayesian inference algorithm grounded on stochastic gradient Langevin dynamics (SGLD) for sampling from the optimal weight posterior specified by PIB; and (4) we demonstrate that our new information measure covers the wide ground of NN's behavior. Crediting to the plug-and-play modularity of SGD/SGLD, we can adapt any existing NN to a PAC-Bayes IB augmented NN seamlessly. Demo code is at \url{https://github.com/RyanWangZf/PAC-Bayes-IB}.

\section{A New Bottleneck with PAC-Bayes Guarantee} \label{sec:new_bottleneck}
In this section, we present the preliminaries of PAC-Bayes theory and then introduce our new information bottleneck. A loss function $\ell(f^\bw(X),Y)$ is a measure of the degree of prediction accuracy $f^\bw(X)$ compared with the ground-truth label $Y$. Given the ground-truth joint distribution $p(X,Y)$, the expected true risk (out-of-sample risk) is taken on expectation as
\begin{equation}\label{eq:expected_true_risk}
    L(\bw) \triangleq \mathbb{E}_{p(\bw|S)}\mathbb{E}_{p(X,Y)}[\ell(f^\bw(X),Y)].
\end{equation}
Note that we take an additional expectation over $p(\bw|S)$ because we are evaluating risk of the learned posterior instead of a specific value of parameter $\bw$. In addition, we call $p(\bw|S)$ posterior here for convenience while it is not the Bayesian posterior that is computed through Bayes theorem $p(\bw|S) = \frac{p(\bw)p(S|\bw)}{p(S)}$. The PAC-Bayes bounds hold even if prior $p(\bw)$ is incorrect and posterior $p(\bw|S)$ is arbitrarily chosen.

In practice, we only own finite samples $S$. This gives rise to the empirical risk as
\begin{equation}\label{eq:expected_empirical_risk}
    L_S(\bw) = \mathbb{E}_{p(\bw|S)}\left[\frac1n \sum_{i=1}^n \ell(f^\bw(X_i),Y_i)\right].
\end{equation}

With the above Eqs. \eqref{eq:expected_true_risk} and \eqref{eq:expected_empirical_risk} at hand, the generalization gap of the learned posterior $p(\bw|S)$ in out-of-sample test is $\increment L(\bw) \triangleq L(\bw) - L_S(\bw)$. \citet{xu2017information} proposed a PAC-Bayes bound based on the information contained in weights $I(\bw;S)$ that
\begin{equation} \label{eq:gen_gap}
\mathbb{E}_{p(S)}[L(\bw) - L_S(\bw)] \leq \sqrt{\frac{2\sigma^2}n I(\bw;S)},
\end{equation}
when $\ell(f^\bw(X),Y)$ is $\sigma$-sub-Gaussian.\footnote{A easy way to fulfill this condition is to clip the lost function to $\ell \in [0,a]$ hence it satisfies $\frac{a}2$-sub-Gaussian \citep{philippe2015high,xu2017information}.} A series of follow-ups tightened this bound and verified it is an effective measure of generalization capability of learning algorithms \citep{mou2018generalization,negrea2019information,pensia2018generalization,zhang2018information}. Therefore, it is natural to build a new information bottleneck grounded on this PAC-Bayes generalization measure, namely the PAC-Bayes information bottleneck (PIB), as
\begin{equation} \label{eq:pac_bayes_ib}
    \min_{p(\bw|S)} \mathcal{L}_{\text{PIB}} = L_S(\bw) + \beta I(
    \bw;S).
\end{equation}

For classification term, the loss term $L_S(\bw)$ becomes the cross-entropy between the prediction $p(Y|X,\bw)$ and the label $p(Y|X)$, hence PIB in Eq. \eqref{eq:pac_bayes_ib} is equivalent to
\begin{equation} \label{eq:pac_bayes_ib_max}
    \max_{p(\bw|S)} I(\bw;Y|X,S) - \beta I(\bw;S),
\end{equation}
which demonstrates a trade-off between maximizing the \emph{sufficiency} (the information of label $Y$ contained in $\bw$) and minimizing the \emph{minimality} of learned \emph{parameters} $\bw$ (the information of dataset $S$ contained in $\bw$). Unlike previous IB based on \emph{representations}, our PIB is built on \emph{weights} that are not directly influenced by inputs and selected activation functions. Likewise, the trade-off described by PIB objective is more reasonable since its compression term is explicitly correlated to generalization of NNs.


\section{Estimating Information Stored in Weights} \label{sec:estimate_info_in_weights}
In this section, we present a new notion of IIW $I(\bw;S)$ built on the Fisher information matrix that relates to the flatness of the Riemannian manifold of loss landscape. Unlike Hessian eigenvalues of loss functions used for identifying flat local minima and generalization but can be made arbitrarily large \citep{dinh2017sharp}, this notion is invariant to re-parameterization of NNs \citep{liang2019fisher}. Also, our measure is invariant to the choice of activation functions because it is not directly influenced by input $X$ like $I(T;X)$. We leverage it to monitor the information trajectory of NNs trained by SGD and cross entropy loss and verify it is capable of reproducing the two-phase transition for varying activations (e.g., \texttt{ReLU}, \texttt{linear}, \texttt{tanh}, and \texttt{sigmoid}) in  \S \ref{sec:exp_non_linearity}. 

\subsection{Closed-form Solution with Gaussian Assumption}
By deriving a new information bottleneck PIB, we can look into how IIW $I(\bw;S)$ and $L_S(\bw)$ evolve during the learning process of NNs optimized by SGD. Now the key challenge ahead is how to estimate the IIW $I(\bw;S)$, as
\begin{equation} \label{eq:def_mutual_information}
    I(\bw;S) = \mathbb{E}_{p(S)}[\text{KL}(p(\bw|S)\parallel p(\bw))]
\end{equation}
is the expectation of Kullback-Leibler (KL) divergence between $p(\bw|S)$ and $p(\bw)$ over the distribution of dataset $p(S)$. And, $p(\bw)$ is the marginal distribution of $p(\bw|S)$ as  $p(\bw) \triangleq \mathbb{E}_{p(S)}[p(\bw|S)]$. When we assume both $p(\bw)=\mathcal{N}(\bw|\btheta_0,\bSigma_0)$ and $p(\bw|S)=\mathcal{N}(\bw|\btheta_S,\bSigma_S)$ are Gaussian distributions, the KL divergence term in Eq. \eqref{eq:def_mutual_information} has closed-form solution as
\begin{equation} \label{eq:gaussian_kl}
    \text{KL}(p(\bw|S)\parallel p(\bw)) =\frac12 \left [ \log \frac{\det \bSigma_S}{\det \bSigma_0} - D + (\btheta_S - \btheta_0)^{\top}\bSigma_0^{-1}(\btheta_S - \btheta_0) + \text{tr}\left(\bSigma_0^{-1}\bSigma_S\right) \right ].
\end{equation}
Here $\det \mathbf{A}$ and $\text{tr}(\mathbf{A})$ are the determinant and trace of matrix $\mathbf{A}$, respectively; $D$ is the dimension of parameter $\bw$ and is a constant for a specific NN architecture; $\btheta_S$ are the yielded weights after SGD converges on the dataset $S$. If the covariances of prior and posterior are proportional,\footnote{Assuming the same covariance for the Gaussian randomization of posterior and prior is a common practice of building PAC-Bayes bound for simplification, see \citep{dziugaite2018data,rivasplata2018pac}.} the logarithmic and trace terms in Eq. \eqref{eq:gaussian_kl} both become constant. Therefore, the mutual information term is proportional to the quadratic term as
\begin{equation} \label{eq:information_in_weights_half}
    I(\bw;S) \propto \mathbb{E}_{p(S)}\left[(\btheta_S - \btheta_0)^{\top}\bSigma_0^{-1}(\btheta_S - \btheta_0)\right]
    = \mathbb{E}_{p(S)}\left[\btheta_S^{\top} \bSigma_0^{-1} \btheta_S\right] - \btheta_0^{\top}\bSigma_0^{-1}\btheta_0.
\end{equation}
In the next section, we will see how to set prior covariance $\bSigma_0$.

\subsection{Bootstrap Covariance of Oracle Prior}
Since the computation of exact oracle prior $\bSigma_0$ needs the knowledge of $p(S)$ \footnote{$\bSigma_0$ is called oracle prior because it minimizes the term $I(\bw;S) = \mathbb{E}_{p(S)}[\text{KL}(p(w|S)\parallel p(W)]$ when $p(w)=\mathbb{E}_{p(S)}[p(w|S)]$ \citep{dziugaite2021role}. }, we propose to approximate it by \emph{bootstrapping} from $S$ as
\begin{equation} \label{eq:covariance_zero}
    \bSigma_0 = \mathbb{E}_{p(S)}\left[(\btheta_S - \btheta_0)(\btheta_S - \btheta_0)^{\top}\right] \simeq \frac1K \sum_{k} (\btheta_{S_k} - \btheta_S)(\btheta_{S_k} - \btheta_S)^{\top},
\end{equation}
where $S_k$ is a bootstrap sample obtained by re-sampling from the finite data $S$, and $S_k \sim p(S)$ is still a valid sample following $p(S)$. Now we are closer to the solution but the above term is still troublesome to calculate. For getting $\{\theta_{S_k}\}_{k=1}^K$, we need to optimize on a series of ($K$ times) bootstrapping datasets $\{S_k\}_{k=1}^K$ via SGD until it converges, which is prohibitive in deep learning practices. Therefore, we propose to approximate the difference $\btheta_S - \btheta_0$ by \emph{influence functions} drawn from robust statistics literature \citep{cook1982residuals,koh2017understanding,wang2020less,wang2020finding}.

\begin{lemma}[Influence Function \citep{cook1982residuals}] \label{lemma:influence_function}
Given a dataset $S=\{(X_i,Y_i)\}_{i=1}^n$ and the parameter $\hat{\btheta}_S \triangleq \argmin_{\btheta} L_S(\btheta) = \argmin_{\btheta} \frac1n \sum_{i=1}^n \ell_i(\btheta)$\footnote{Note $L_S(\btheta)$ is not the expected empirical risk $L_S(\bw)$ in Eq. \eqref{eq:expected_empirical_risk}; instead, it is the deterministic empirical risk that only relates to the mean parameter $\btheta$. We also denote $\ell(f^\btheta(X_i),Y_i)$ by $\ell_i(\btheta)$ for the notation conciseness.}  that optimizes the empirical loss function, if we drop sample $(X_j,Y_j)$ in $S$ to get a jackknife sample $S_{\setminus j}$ and retrain our model, the new parameters are $\hat{\btheta}_{S_{\setminus j}} = \argmin_{\btheta} L_{S_{\setminus j}}(\btheta) = \argmin_{\btheta} \frac1n \sum_{i=1}^n \ell_i(\btheta) - \frac1n \ell_{j}(\btheta)$. The approximation of parameter difference $\hat{\btheta}_{S_{\setminus j}} - \hat{\btheta}_S$ is defined by influence function $\bpsi$, as
\begin{equation}
    \hat{\btheta}_{S_{\setminus j}} - \hat{\btheta}_S \simeq -\frac1n \bpsi_j, \ \text{where} \ \bpsi_j = - \bH_{\hat{\btheta}_S}^{-1} \nabla_\btheta \ell_j(\hat{\btheta}_S) \in \mathbb{R}^D,
\end{equation}
and $\mathbf{H}_{\hat{\btheta}_S} \triangleq \frac1n \sum_{i=1}^n \nabla^2_{\btheta} \ell_i(\hat{\btheta}_S) \in \mathbb{R}^{D\times D}$ is Hessian matrix.
\end{lemma}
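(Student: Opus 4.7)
The plan is to prove the influence function formula via the classical \emph{upweighting} perturbation trick, treating the jackknife removal as the special case $\epsilon = -1/n$ of a continuous one-parameter family of perturbed minimizers. Concretely, I would introduce
\begin{equation}
\hat{\btheta}_\epsilon \;\triangleq\; \argmin_{\btheta} \; \frac{1}{n}\sum_{i=1}^n \ell_i(\btheta) + \epsilon\, \ell_j(\btheta),
\end{equation}
so that $\hat{\btheta}_0 = \hat{\btheta}_S$ and $\hat{\btheta}_{-1/n} = \hat{\btheta}_{S_{\setminus j}}$. The target quantity $\hat{\btheta}_{S_{\setminus j}} - \hat{\btheta}_S$ is then a first-order Taylor expansion of the curve $\epsilon \mapsto \hat{\btheta}_\epsilon$ about $\epsilon = 0$, evaluated at $\epsilon = -1/n$, so it suffices to compute $\tfrac{d}{d\epsilon}\hat{\btheta}_\epsilon\big|_{\epsilon=0}$.

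Next I would write the first-order optimality condition for $\hat{\btheta}_\epsilon$, namely
\begin{equation}
\frac{1}{n}\sum_{i=1}^n \nabla_\btheta \ell_i(\hat{\btheta}_\epsilon) + \epsilon\, \nabla_\btheta \ell_j(\hat{\btheta}_\epsilon) \;=\; 0,
\end{equation}
and differentiate this identity with respect to $\epsilon$ using the chain rule. Evaluating at $\epsilon = 0$ collapses the second-order term of $\ell_j$ (it is multiplied by $\epsilon$) and leaves
\begin{equation}
\bH_{\hat{\btheta}_S} \,\frac{d\hat{\btheta}_\epsilon}{d\epsilon}\bigg|_{\epsilon=0} + \nabla_\btheta \ell_j(\hat{\btheta}_S) \;=\; 0,
\end{equation}
so that $\tfrac{d}{d\epsilon}\hat{\btheta}_\epsilon\big|_{\epsilon=0} = -\bH_{\hat{\btheta}_S}^{-1}\nabla_\btheta \ell_j(\hat{\btheta}_S) = \bpsi_j$. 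A first-order Taylor expansion at $\epsilon = -1/n$ then yields the claim $\hat{\btheta}_{S_{\setminus j}} - \hat{\btheta}_S \simeq -\tfrac{1}{n}\bpsi_j$, which is exactly the asserted approximation.

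The step I expect to require the most care is justifying the existence and differentiability of the curve $\epsilon \mapsto \hat{\btheta}_\epsilon$ near $\epsilon = 0$, which is where a standard implicit function theorem argument is needed and where one must assume that the empirical risk is twice continuously differentiable in $\btheta$ and that $\bH_{\hat{\btheta}_S}$ is positive definite (hence invertible). For non-convex NN losses these assumptions are known to be delicate, but they hold locally around a strict local minimum, which is the regime in which influence functions are standardly applied \citep{koh2017understanding}. Finally, I would note that the $\simeq$ in the statement hides an $O(1/n^2)$ Taylor remainder, so the lemma should be read as a first-order approximation rather than an exact identity; this is consistent with how the result is subsequently used to approximate $\btheta_{S_k} - \btheta_S$ in Eq.~\eqref{eq:covariance_zero}.
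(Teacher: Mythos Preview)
Your derivation is correct and is precisely the classical upweighting argument for influence functions. Note, however, that the paper does not supply its own proof of this lemma: it is stated with a citation to \citet{cook1982residuals} and used as a black box, with the appendix proving only Lemmas~\ref{lemma:oracle_covariance} and~\ref{lemma:optimal_posterior}. So there is no paper-proof to compare against; your argument is simply the standard textbook one (essentially the same route as in \citet{koh2017understanding}), and it would serve perfectly well as the omitted justification.
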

The application of influence functions can be further extended to the case when the loss function is perturbed by a vector $\bm{\xi}=(\xi_1,\xi_2,\dots,\xi_n)^{\top} \in \mathbb{R}^{n}$ as $\hat{\btheta}_{S,\bxi} = \argmin_{\btheta} \frac1n \sum_{i=1}^n \xi_i \ell_i(\btheta)$. In this scenario, the parameter difference can be approximated by
\begin{equation}
    \hat{\btheta}_{S,\bxi} - \hat{\btheta}_S \simeq \frac1n \sum_{i=1}^n \left(\xi_i - 1 \right) \bpsi_i = \frac1n \bPsi^{\top} \left(\bxi -  \1 \right),
\end{equation}
where $\bPsi = (\bpsi_1, \bpsi_2, \dots, \bpsi_n)^{\top} \in \mathbb{R}^{n \times D}$ is a combination of all influence functions $\bpsi$; $\1 = (1,1,\dots,1)^{\top}$ is an $n$-dimensional all-one vector. Lemma \ref{lemma:influence_function} gives rise to the following lemma on approximation of oracle prior covariance in Eq. \eqref{eq:covariance_zero}:

\begin{algorithm}[t]
	\caption{Efficient approximate information estimation of $I(\bw;S)$ \label{alg:2}}
	\LinesNumbered
	\KwData{Total number of samples $n$, batch size $B$, number of mini-batches in one epoch $T_0$, number of information estimation iterations $T_1$, learning rate $\eta$, moving average hyperparameters $\rho$ and $K$, a sequence of gradients set $\nabla \mathcal{L} = \emptyset$}
	\KwResult{Calculated approximate information $\widetilde{I}(\bw;S)$}
	Pretrain the model by vanilla SGD to obtain the prior mean $\btheta_0$ \;
	\For{t=1:$T_0$}{
	$\nabla L_t \gets \nabla_{\btheta}\frac1B \sum_b \ell_b(\hat{\btheta}_{t-1})$, $\hat{\btheta}_{t} \gets \hat{\btheta}_{t-1} - \eta \nabla L_t $ \tcc*{Vanilla SGD} 
      $\nabla \mathcal{L} \gets \nabla \mathcal{L} \bigcup \{\nabla L_t \}$ \tcc*{Store gradients}
      $\Bar{\btheta}_{t} \gets \sqrt{ \rho \Bar{\btheta}^2_{t-1} +  \frac{1-\rho}{K}\sum_{k=0}^{K-1} \hat{\btheta}^2_{t-k}}$ \tcc*{Moving average}
    }
    $\increment \btheta \gets  \Bar{\btheta}_{T_0} - \btheta_0$, \ $\increment \bF_0 \gets 0$ \;
    \For{t=1:$T_1$}{
    $\increment{\bF}_t \gets \increment{\bF}_{t-1} + (\increment \btheta^{\top}  \nabla L_t)^2$ \tcc*{Storage-friendly computation}
    }
    $\widetilde{I}(\bw;S) \gets \frac{n}{T_1} \increment{\bF}_{T_1}$\;
\end{algorithm}
    
\begin{lemma}[Approximation of Oracle Prior Covariance] \label{lemma:oracle_covariance}
Given the definition of influence functions (Lemma \ref{lemma:influence_function}) and Poisson bootstrapping (Lemma \ref{lemma:poisson}), the covariance matrix of the oracle prior can be approximated by
\begin{equation} \label{eq:prior_cov_fisher}
    \bSigma_0 =  \mathbb{E}_{p(S)}\left[(\btheta_{S}- \btheta_0)(\btheta_{S} - \btheta_0)^{\top}\right]  \simeq \frac1K \sum_{k=1}^K \left(\hat{\btheta}_{\bxi^k}- \hat{\btheta}\right) \left(\hat{\btheta}_{\bxi^k} - \hat{\btheta}\right)^{\top} 
     \simeq \frac1n  \bH_{\hat{\btheta}}^{-1} \bF_{\hat{\btheta}} \bH_{\hat{\btheta}}^{-1} \simeq \frac1n \bF_{\hat{\btheta}}^{-1},
\end{equation}
where $\bF_{\hat{\btheta}}$ is Fisher information matrix (FIM); we omit the subscript $S$ of $\hat{\btheta}_S$ and $\hat{\btheta}_{S,\bxi}$ for notation conciseness, and $\bxi^k$ is the bootstrap resampling weight in the $k$-th experiment.
\end{lemma}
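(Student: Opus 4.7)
The plan is to chain together three approximations, one for each $\simeq$ in the statement.

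First, I would invoke the Poisson-bootstrapping result (Lemma~\ref{lemma:poisson}) to replace the intractable expectation over $p(S)$ by a Monte Carlo average over $K$ bootstrap resamples. Each resample is encoded by a weight vector $\bxi^{k} \in \mathbb{R}^{n}$ with i.i.d.\ Poisson$(1)$ entries, and the corresponding reweighted minimizer $\hat{\btheta}_{\bxi^{k}}$ serves as a valid draw of $\btheta_{S}$ from $p(S)$. This directly yields the first $\simeq$.

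Second, I would apply the influence-function expansion for perturbed losses displayed just before the lemma,
\begin{equation}
\hat{\btheta}_{\bxi^{k}} - \hat{\btheta} \;\simeq\; \frac{1}{n}\,\bPsi^{\top}(\bxi^{k}-\1),
\end{equation}
form the outer product, and average over $k$ to obtain
\begin{equation}
\frac{1}{K}\sum_{k=1}^{K}(\hat{\btheta}_{\bxi^{k}}-\hat{\btheta})(\hat{\btheta}_{\bxi^{k}}-\hat{\btheta})^{\top} \;\simeq\; \frac{1}{n^{2}}\,\bPsi^{\top}\!\left[\frac{1}{K}\sum_{k=1}^{K}(\bxi^{k}-\1)(\bxi^{k}-\1)^{\top}\right]\!\bPsi.
\end{equation}
Because the centered Poisson$(1)$ weights have unit covariance, the bracketed empirical second-moment matrix concentrates to the identity as $K\to\infty$, leaving $\frac{1}{n^{2}}\bPsi^{\top}\bPsi$. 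Substituting $\bpsi_{i} = -\bH_{\hat{\btheta}}^{-1}\nabla_{\btheta}\ell_{i}(\hat{\btheta})$ from Lemma~\ref{lemma:influence_function} and recognizing the empirical Fisher information $\bF_{\hat{\btheta}} = \tfrac{1}{n}\sum_{i}\nabla_{\btheta}\ell_{i}(\hat{\btheta})\nabla_{\btheta}\ell_{i}(\hat{\btheta})^{\top}$ delivers the middle expression $\frac{1}{n}\bH_{\hat{\btheta}}^{-1}\bF_{\hat{\btheta}}\bH_{\hat{\btheta}}^{-1}$.

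Finally, for the last $\simeq$ I would invoke the classical Bartlett-type identity that at (or near) an optimum of the negative log-likelihood the Hessian coincides with the Fisher information, i.e.\ $\bH_{\hat{\btheta}} \simeq \bF_{\hat{\btheta}}$. Under this identification the sandwich $\bH^{-1}\bF\bH^{-1}$ collapses to $\bF^{-1}$, completing the chain.

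The main obstacle is the last step: the Hessian--Fisher equivalence requires that $\ell$ be (close to) a negative log-likelihood of a well-specified model and that evaluation happen sufficiently close to a critical point, so these assumptions must be stated explicitly. Secondary technicalities are controlling the Monte Carlo error from finite $K$ and the first-order Taylor remainder that underlies the influence-function expansion; both are standard but require a non-singular Hessian, i.e.\ a locally strongly convex empirical risk in a neighborhood of $\hat{\btheta}$.
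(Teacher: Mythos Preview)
Your proposal is correct and follows essentially the same three-step chain as the paper. Two minor procedural differences are worth noting: (i) for the bootstrap second moment the paper works with the uncentered matrix $\mathbb{E}[\bxi\bxi^{\top}] = \1\1^{\top}+\mathbf{I}_n$ and then kills the $\1\1^{\top}$ piece via the first-order condition $\bPsi^{\top}\1 = -\bH_{\hat{\btheta}}^{-1}\sum_i\nabla_{\btheta}\ell_i(\hat{\btheta})=0$, whereas you center first and get $\mathbf{I}_n$ directly from independence and unit variance of Poisson$(1)$; (ii) for the final collapse $\bH_{\hat{\btheta}}\simeq\bF_{\hat{\btheta}}$ the paper invokes a neural-network-specific decomposition due to Martens (stated as a separate supporting lemma in the appendix) rather than the classical Bartlett identity you cite---the conclusion is the same but the NN version does not require the model to be well specified, only that $\nabla_{\hat{\by}}\ell_i(\hat{\btheta})\approx 0$ at a well-trained point.
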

Please refer to Appendix \ref{appx:proof_4_1} for the proof of this lemma. 

\subsection{Efficient Information Estimation Algorithm}
After the approximation of oracle prior covariance, we are now able to rewrite the IIW term $I(\bw;S)$ in Eq. \eqref{eq:information_in_weights_half} to 
\begin{equation} \label{eq:information_final}
    I(\bw;S) \propto n \mathbb{E}_{p(S)}\left[(\btheta_S - \btheta_0)^{\top} \bF_{\hat{\btheta}} (\btheta_S - \btheta_0) \right] \simeq  n (\Bar{\btheta}_S - \btheta_0)^{\top} \bF_{\hat{\btheta}} (\Bar{\btheta}_S - \btheta_0) = \widetilde{I}(\bw;S).
\end{equation}
We define the approximate information by $\widetilde{I}(\bw;S)$ where we approximate the expectation $\mathbb{E}_{p(S)}[\btheta_S^{\top} F_{\hat{\btheta}} \btheta_S]$ by 
    $\Bar{\btheta}_S = \sqrt{\frac1K \sum_{k=1}^K \hat{\btheta}_k^2} =  \left(\sqrt{\frac1K \sum_{k=1}^K \hat{\theta}_{1,k}^2}, \dots, \sqrt{\frac1K \sum_{k=1}^K \hat{\theta}_{D,k}^2} \right)^{\top}$.\footnote{The quadratic mean is closer to the true value than arithmetic mean because of the quadratic term within the expectation function.} In Eq. \eqref{eq:information_final}, the information consists of two major components: $\increment{\btheta} = \Bar{\btheta}_S - \btheta_0 \in \mathbb{R}^D$ and $\bF_{\hat{\btheta}} \in \mathbb{R}^{D\times D}$, which can easily cause out-of-memory error due to the high-dimensional matrix product operations. We therefore hack into FIM to get
\begin{equation} \label{eq:final_iiw}
    \widetilde{I}(\bw;S)  = n \increment{\btheta}^{\top} \left [\frac{1}{T} \sum_{t=1}^T \nabla_{\btheta} \ell_t(\hat{\btheta})  \nabla_{\btheta} \ell^{\top}_t(\hat{\btheta}) \right] \increment{\btheta} 
      = \frac{n}{T}\sum_{t=1}^T \left[\increment{\btheta}^{\top} \nabla_{\btheta} \ell_t(\hat{\btheta}) \right]^2, 
\end{equation}
such that the high dimensional matrix vector product reduces to vector inner product. We encapsulate the algorithm for estimating IIW during vanilla SGD by Algorithm \ref{alg:2}.

\begin{algorithm}[t]
	\caption{Optimal Gibbs posterior inference by SGLD. \label{alg:3}}
	\LinesNumbered
	\KwData{Total number of samples $n$, batch size $B$, learning rate $\eta$, temperature $\beta$}
	\KwResult{A sequence of weights $\{\bw_t\}_{t\geq \hat{k}}$ following $p(\bw|S^*)$}
	\Repeat{the weight sequence $\{\bw_t\}_{t\geq \hat{k}}$ becomes stable}
	{
	\tcc{Mini-batch gradient of energy function}
	$\nabla \widetilde{U}_{S^*}(\bw_{t-1}) \gets \nabla \left(-\frac{B}{n} \sum_b \log p(Y_b|X_b,\bw_{t-1}) - \beta_{t-1} \log p(\bw_{t-1}) \right)$ \;
	\tcc{SGLD by gradient plus isotropic Gaussian noise}
	$\varepsilon_t \gets \mathcal{N}(\varepsilon|\mathbf{0},\mathbf{I}_D)$,  $\bw_{t} \gets \bw_{t-1} - \eta_{t-1} \nabla \widetilde{U}_{S^*}(\bw_{t-1}) + \sqrt{2 \eta_{t-1} \beta_{t-1} }\varepsilon_t $ \; 
	\tcc{Learning rate \& temperature decay}
    $\eta_t \gets \phi_{\eta}(\eta_{t-1})$, $\beta_t \gets \phi_{\beta}(\beta_{t-1})$, $t \gets t+1$ \;
    }
\end{algorithm}

\section{Bayesian Inference for the Optimal Posterior} \label{sec:bayes_inference_pib}
Recall that we designed a new bottleneck on the expected generalization gap drawn from PAC-Bayes theory in \S \ref{sec:new_bottleneck}, and then derived an approximation of the IIW in \S \ref{sec:estimate_info_in_weights}. The two components of PAC-Bayes IB in Eq. \eqref{eq:pac_bayes_ib_max} are hence tractable as a learning objective. We give the following lemma on utilizing it for inference.

\begin{figure}[t]
    \centering
    \includegraphics[width=0.99\linewidth]{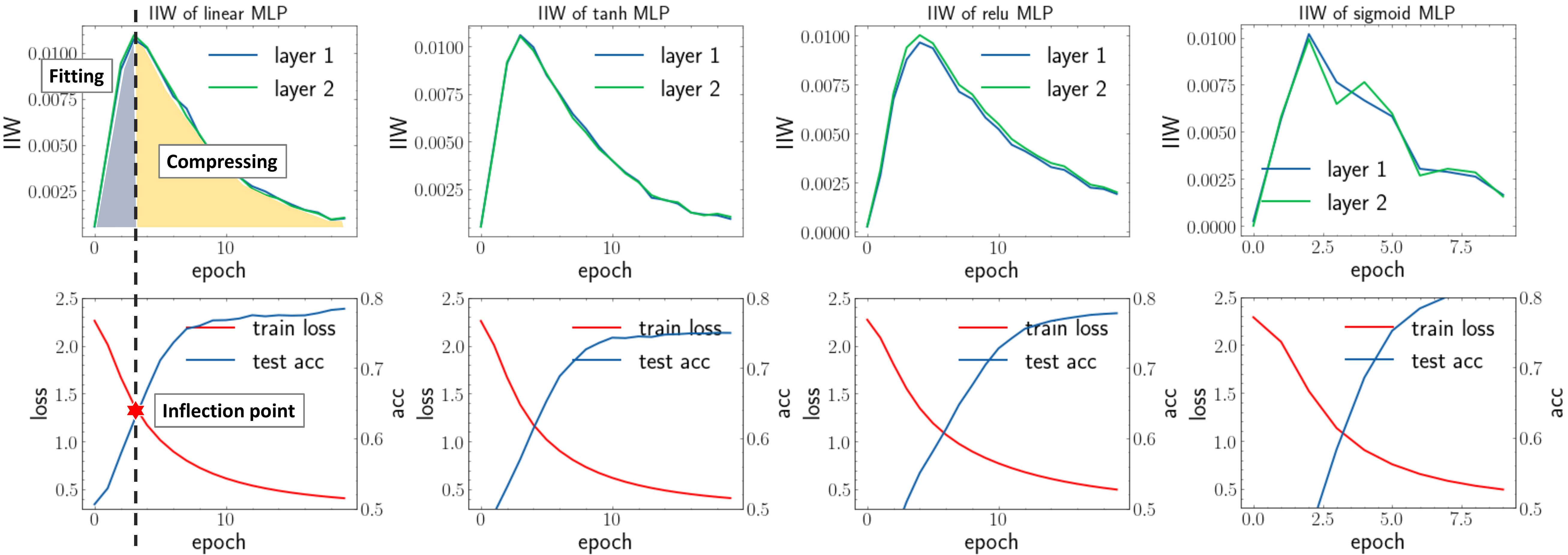}
    \caption{IIW (up), loss and accuracy (down) of different activation functions (\texttt{linear}, \texttt{tanh}, \texttt{ReLU}, and \texttt{sigmoid}) NNs. There is a clear boundary between the initial \textbf{fitting} and the \textbf{compression} phases identified by IIW. Meanwhile, the train loss encounters the inflection point that keeps decreasing with slower slope. Note that the learning rate is set small (1e-4) except for \texttt{sigmoid}-NN for the better display of two phases. }
    \label{fig:exp_nonlinearities}
    \vspace{-1.5em}
\end{figure}

\begin{lemma}[Optimal Posterior for PAC-Bayes Information Bottleneck]\label{lemma:optimal_posterior}
Given an observed dataset $S^*$, the optimal posterior $p(\bw|S^*)$ of PAC-Bayes IB in Eq. \eqref{eq:pac_bayes_ib} should satisfy the following form that
\begin{equation}
    p(\bw|S^*) = \frac1{Z(S)} p(\bw) \exp \left\{-\frac{1}{\beta} \hat{L}_{S^*}(\bw) \right\} =  \frac1{Z(S)} \exp \left\{-\frac{1}{\beta}  U_{S^*}(\bw) \right\},
\end{equation}
where $U_{S^*}(\bw)$ is the energy function defined as $U_{S^*}(\bw) = \hat{L}_{S^*}(\bw) - \beta \log p(\bw)$, and $Z(S)$ is the normalizing constant.
\end{lemma}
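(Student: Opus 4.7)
The plan is to recognize this as the standard Gibbs variational principle in disguise, and to derive the optimal posterior by a direct minimization of the PIB functional restricted to the observed dataset. For a fixed realization $S = S^*$, the PIB objective in Eq.~\eqref{eq:pac_bayes_ib} contributes
\begin{equation*}
\mathcal{L}_{\text{PIB}}[p(\bw\mid S^*)] = \mathbb{E}_{p(\bw\mid S^*)}[\hat{L}_{S^*}(\bw)] + \beta\,\text{KL}\bigl(p(\bw\mid S^*)\,\big\|\,p(\bw)\bigr),
\end{equation*}
because the outer expectation $\mathbb{E}_{p(S)}[\cdot]$ decouples across distinct $S$ and the prior $p(\bw)$ is held fixed. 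Thus minimization over the whole family $\{p(\bw\mid S)\}_S$ reduces to a separate minimization in $p(\bw\mid S^*)$ for each observed $S^*$.

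Next, I would carry out the minimization using the Lagrangian with the single normalization constraint $\int p(\bw\mid S^*)\,d\bw = 1$:
\begin{equation*}
\mathcal{J} = \int p(\bw\mid S^*)\bigl[\hat{L}_{S^*}(\bw) + \beta\log\tfrac{p(\bw\mid S^*)}{p(\bw)}\bigr]d\bw + \lambda\Bigl(\int p(\bw\mid S^*)\,d\bw - 1\Bigr).
\end{equation*}
Taking the functional derivative with respect to $p(\bw\mid S^*)$ and setting it to zero yields $\hat{L}_{S^*}(\bw) + \beta\log\frac{p(\bw\mid S^*)}{p(\bw)} + \beta + \lambda = 0$, so that $p(\bw\mid S^*) \propto p(\bw)\exp\{-\tfrac{1}{\beta}\hat{L}_{S^*}(\bw)\}$. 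Imposing normalization fixes the Lagrange multiplier and produces the claimed form, with $Z(S)=\int p(\bw)\exp\{-\tfrac{1}{\beta}\hat{L}_{S^*}(\bw)\}\,d\bw$. Rewriting $p(\bw)\exp\{-\tfrac{1}{\beta}\hat{L}_{S^*}(\bw)\} = \exp\{-\tfrac{1}{\beta}(\hat{L}_{S^*}(\bw) - \beta\log p(\bw))\} = \exp\{-\tfrac{1}{\beta}U_{S^*}(\bw)\}$ yields the second equality of the lemma.

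For rigor, I would prefer to invoke the Donsker--Varadhan representation directly: for any probability measure $q$ absolutely continuous with respect to $p(\bw)$,
\begin{equation*}
\mathbb{E}_q[\hat{L}_{S^*}(\bw)] + \beta\,\text{KL}(q\,\|\,p) = \beta\,\text{KL}\bigl(q\,\big\|\,p^*\bigr) - \beta\log Z(S),
\end{equation*}
where $p^*(\bw) = Z(S)^{-1}p(\bw)\exp\{-\tfrac{1}{\beta}\hat{L}_{S^*}(\bw)\}$. Since $\text{KL}(q\,\|\,p^*) \geq 0$ with equality iff $q = p^*$ almost everywhere, this immediately identifies the unique minimizer and simultaneously establishes convexity, so there is no need to check second-order conditions. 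The main (mild) obstacle is the initial step of justifying pointwise minimization in $S$: one should note that the PIB objective is already separable across $S$ because both $L_S(\bw)$ and $\text{KL}(p(\bw\mid S)\|p(\bw))$ depend only on the conditional at that specific $S$, which keeps the argument clean and avoids any apparent tension with $p(\bw)$ being the marginal of $\{p(\bw\mid S)\}$.
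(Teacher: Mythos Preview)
Your Lagrangian derivation matches the paper's proof essentially step for step: both impose the normalization constraint $\int p(\bw\mid S)\,d\bw=1$, take the functional derivative, and solve the stationarity condition to obtain the Gibbs form. Your additional Donsker--Varadhan argument goes slightly beyond the paper by certifying that the stationary point is the unique global minimizer, a second-order check the paper's proof omits.
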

Please refer to Appendix \ref{appx:proof_4_2} for the proof. The reason why we write the posterior in terms of an exponential form is that it is a typical \emph{Gibbs distribution} \citep{kittel2004elementary} (also called Boltzmann distribution) with \emph{energy function} $ U_{S^*}(\bw)$ and \emph{temperature} $\beta$ (the same $\beta$ of PIB appears in Eq. \eqref{eq:pac_bayes_ib}). Crediting to this formula, we can adopt Markov chain Monte Carlo (MCMC) for rather efficient Bayesian inference. Specifically, we propose to use stochastic gradient Langevin dynamics (SGLD) \citep{welling2011bayesian} that has been proved efficient and effective in large-scale posterior inference. SGLD can be realized by a simple adaption of SGD as
\begin{equation} \label{eq:sgld}
    \bw_{k+1} = \bw_{k} - \eta_k \mathbf{g}_k + \sqrt{2\eta_k \beta} \varepsilon_k,
\end{equation}
where $\eta_k$ is step size, $\varepsilon_k \sim \mathcal{N}(\varepsilon|\mathbf{0},\mathbf{I}_D)$ is a standard Gaussian noise vector, and $\mathbf{g}_k$ is an unbiased estimate of energy function gradient $\nabla U(\bw_k)$. SGLD can be viewed as a discrete Langevin diffusion described by stochastic differential equation \citep{raginsky2017non,borkar1999strong}: $d \bw(t) = - \nabla U(\bw(t)) dt + \sqrt{2 \beta}d B(t)$, where $\{B(t)\}_{t\geq 0}$ is the standard Brownian motion in $\mathbb{R}^D$. The Gibbs distribution $\pi(\bw) \propto \exp(-\frac{1}{\beta} U(\bw))$ is the unique invariant distribution of the Langevin diffusion. And, distribution of $\bw_t$ converges rapidly to $\pi(\bw)$ when $t \to \infty$ with sufficiently small $\beta$ \citep{chiang1987diffusion}. Similarly for SGLD in Eq. \eqref{eq:sgld}, under the conditions that $\sum_t^{\infty} \eta_t \to \infty \ \text{and} \  \sum_t^{\infty} \eta_t^2 \to 0$,
and an annealing temperature $\beta$, the sequence of $\{\bw_k\}_{k\geq \hat{k}}$ converges to Gibbs distribution with sufficiently large $\hat{k}$.

As we assume the oracle prior $p(\bw) = \mathcal{N}(\bw | \btheta_0, \bSigma_0)$, $\log p(\bw)$ satisfies
\begin{equation} \label{eq:prior_solution}
    - \log p(\bw) \propto (\bw - \btheta_0)^{\top} \bSigma_0^{-1} (\bw - \btheta_0) + \log(\det \bSigma_0).
\end{equation}
The inference of the optimal posterior is then summarized by Algorithm \ref{alg:3}. $\phi_\eta(\cdot)$ and $\phi_\beta(\cdot)$ are learning rate decay and temperature annealing functions, e.g., cosine decay, respectively. Our SGLD based algorithm leverages the advantage of MCMC such that it is capable of sampling from the optimal posterior even for very complex NNs. Also, it does need to know the groundtruth distribution of $S$ while still theoretically allows global convergence avoiding local minima. It can be realized with a minimal adaptation of common auto-differentiation packages, e.g., PyTorch \citep{paszke2019pytorch}, by injecting isotropic noise in the SGD updates. Please refer to Appendix \ref{appx:computation} for the details of computation for PIB object.

\section{Experiments}
In this section, we aim to verify the intepretability of the proposed notion of IIW by Eq. \eqref{eq:final_iiw}. We monitor the information trajectory when training NNs \textbf{with plain cross entropy loss and SGD} for the sake of activation functions (\S \ref{sec:exp_non_linearity}), architecture (\S \ref{sec:exp_architecture}),  noise ratio (\S \ref{sec:exp_random_label}), and batch size (\S \ref{sec:exp_info_batch_size}). We also substantiate the superiority of optimal Gibbs posterior inference based on the proposed Algorithm \ref{alg:3}, where PIB instead of plain cross entropy is used as the objective function (\S \ref{sec:exp_energy_function}). We conclude the empirical observations in \S \ref{sec:exp_summary} at last. Please refer to Appendix \ref{appx:exp_protocol} for general experimental setups about the used datasets and NNs.

\begin{wrapfigure}{R}{0.5\textwidth}
\centering
		\begin{subfigure}{0.48\linewidth}
			\centering
			\includegraphics[width=0.95\textwidth]{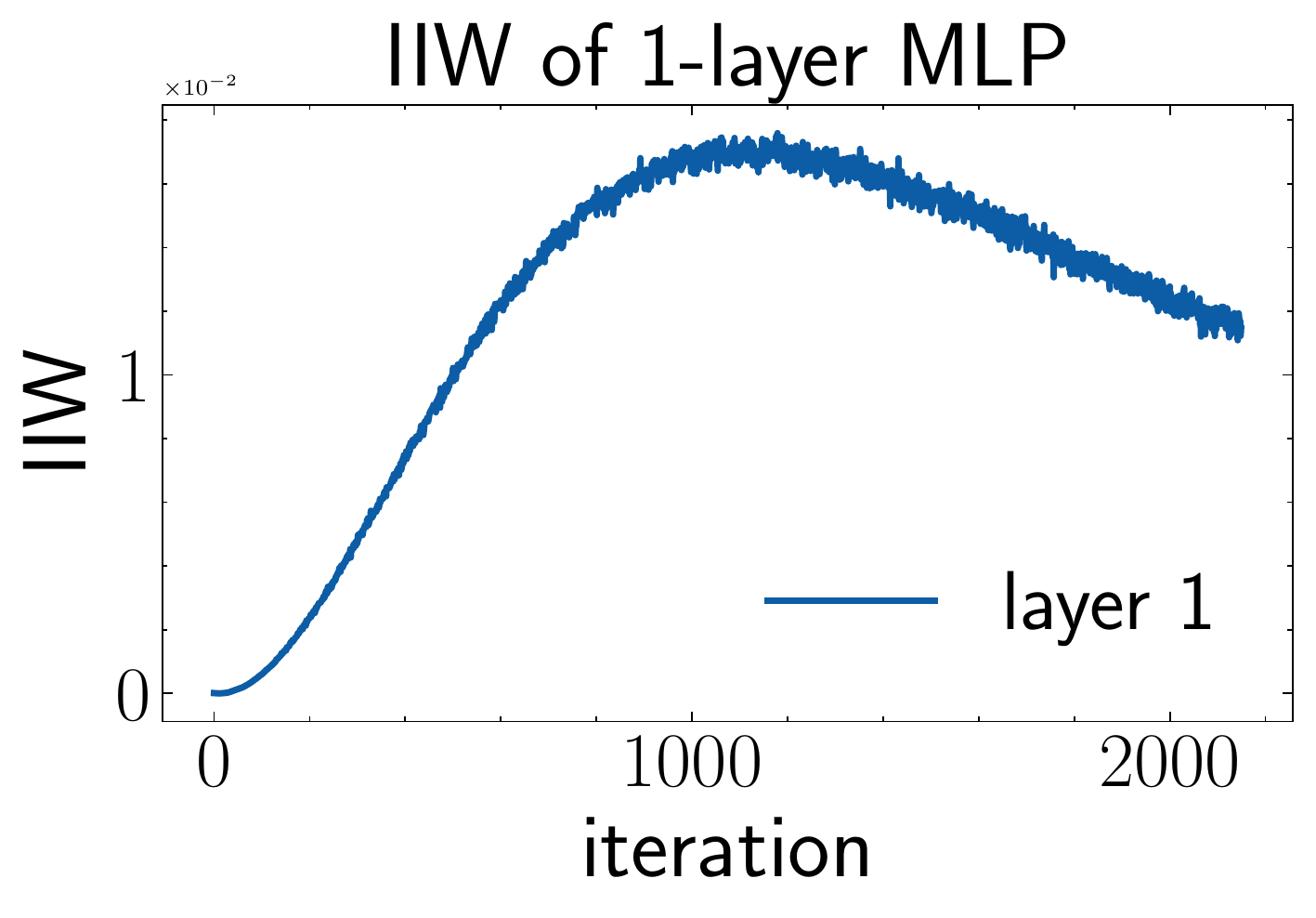}
		\end{subfigure}
		\begin{subfigure}{0.48\linewidth}
			\centering
			\includegraphics[width=0.95\textwidth]{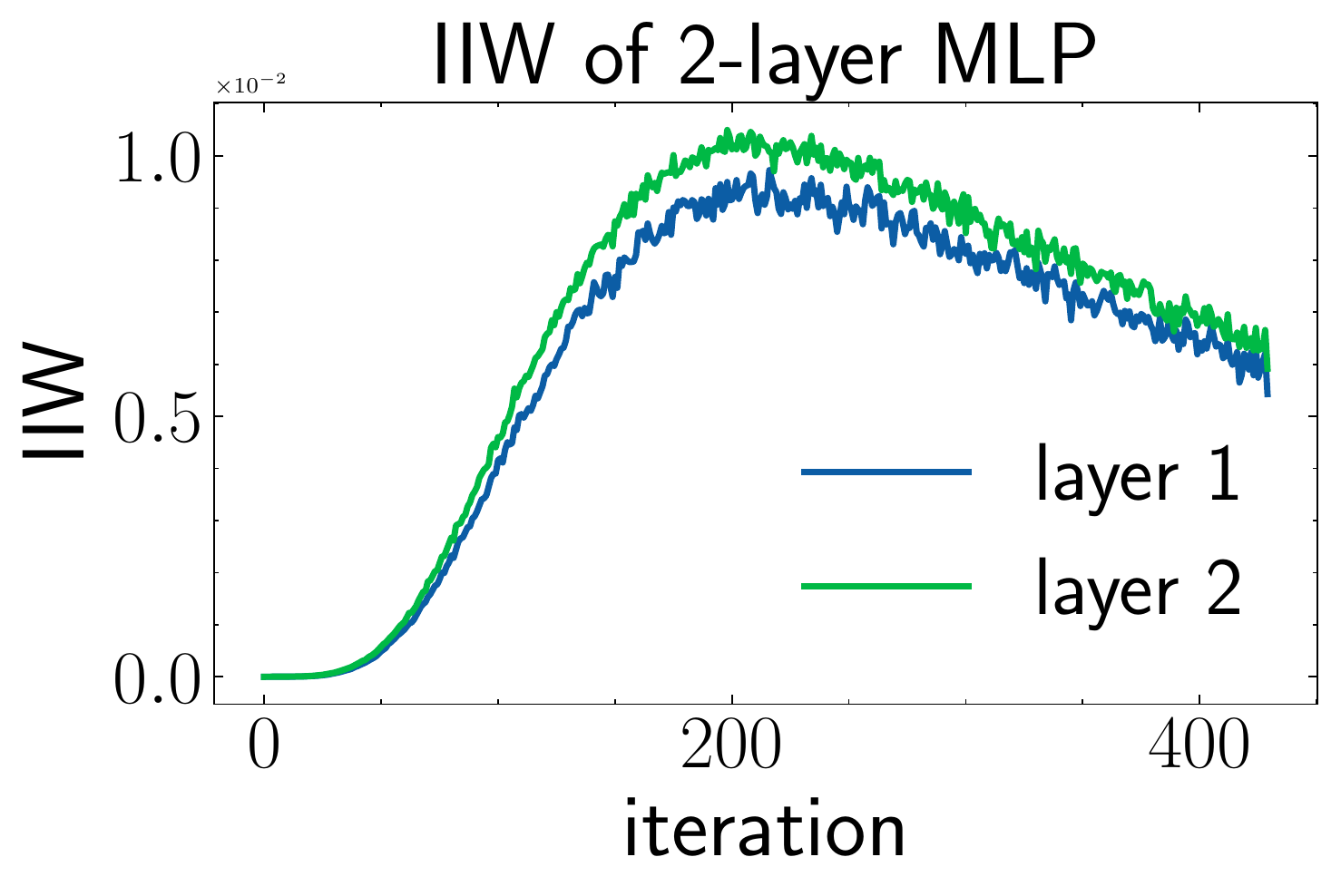}
		\end{subfigure}
		\begin{subfigure}{0.48\linewidth}
			\centering
			\includegraphics[width=0.95\textwidth]{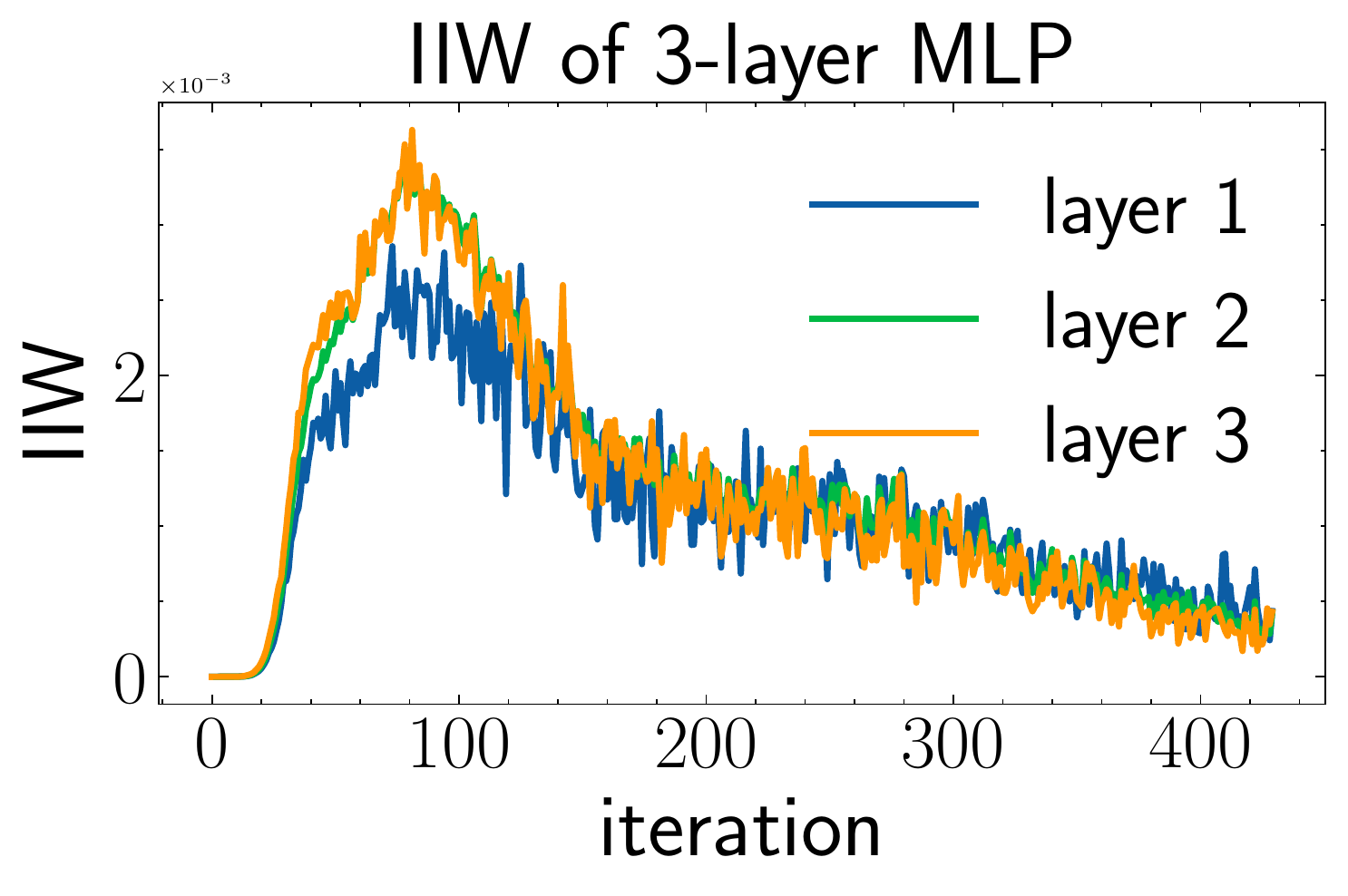}
		\end{subfigure}
				\begin{subfigure}{0.48\linewidth}
			\centering
			\includegraphics[width=0.95\textwidth]{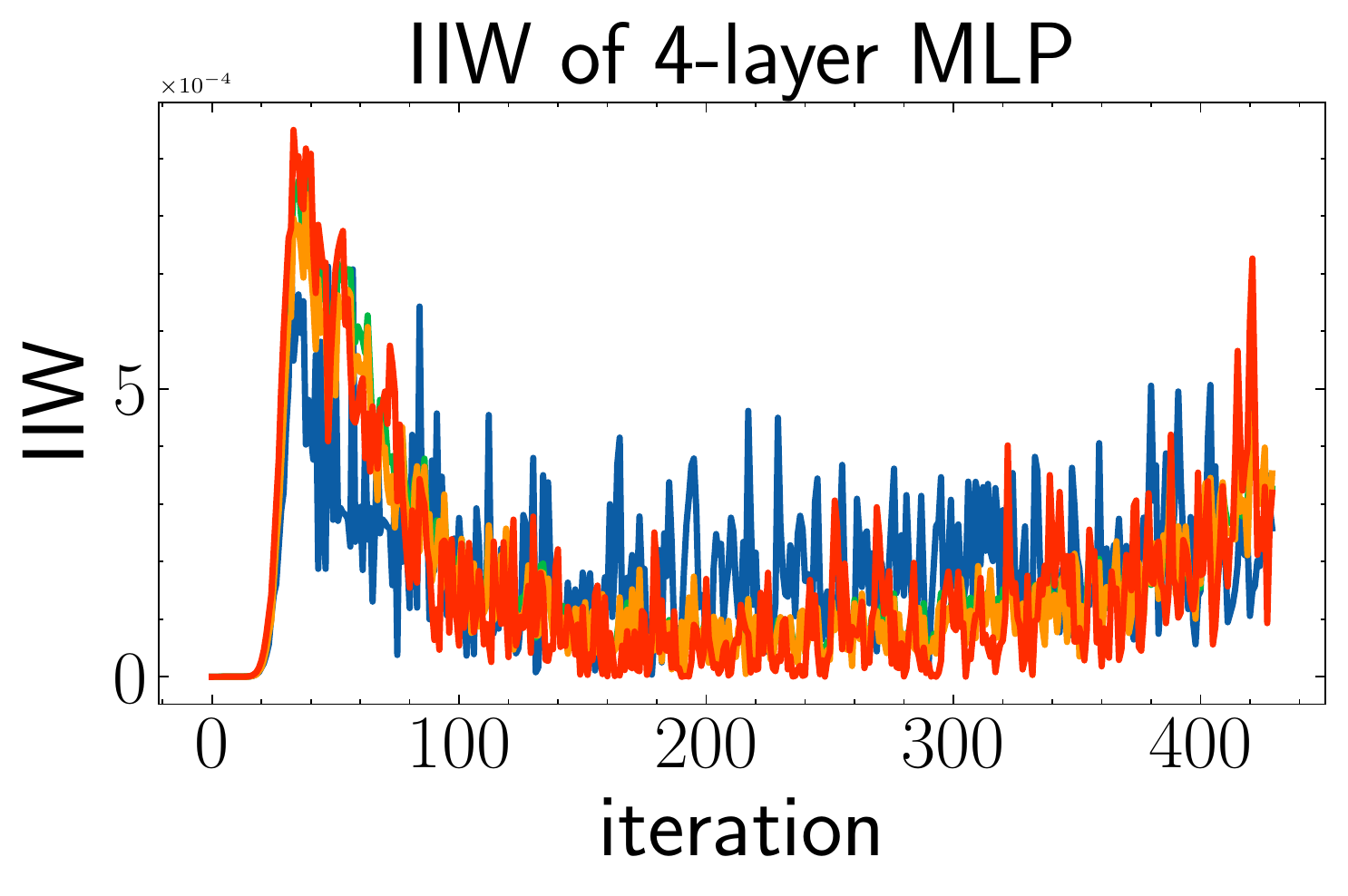}
		\end{subfigure}
		\caption{Information compression with varying number of layers (1, 2, 3, and 4) for ReLU MLPs. All follow the general trend of fitting to compression phase transition. And, deeper layers can accelerate both the fitting and compressing phases. \label{fig:exp_info_layers}}
		\vspace{-2em}
\end{wrapfigure}

\subsection{Information with Different Activation Functions}\label{sec:exp_non_linearity}
We train a 2-layer MLP (784-512-10) with plain cross-entropy loss by Adam on the MNIST dataset, meanwhile monitor the trajectory of the IIW $I(\bw;S)$. Results are illustrated in Fig. \ref{fig:exp_nonlinearities} where different activation functions (\texttt{linear}, \texttt{tanh}, \texttt{ReLU}, and \texttt{sigmoid}) are testified. We identify that there is a clear boundary between fitting and compression phases for all of them. For example, for the linear activation function on the first column, the IIW $I(\bw;S)$ surges within the first several iterations then drops slowly during the next iterations. Simultaneously, we could see that the training loss reduces sharply at the initial stage, then keeps decreasing during the information compression. At the last period of compression, we witness the information fluctuates near zero with the recurrence of memorization phenomenon (IIW starts to increase). This implies that further training is causing over-fitting. IIW shows great universality 
on representing the information compression of NNs.

\subsection{Information with Deeper and Wider Architecture}\label{sec:exp_architecture}
Having identified the phase transition of the 2-layer MLP corresponding to IIW $I(\bw;S)$, we test it under more settings: different architectures and different batch sizes. For the architecture setting, we design MLPs from 1 to 4 layers. Results are shown in Fig. \ref{fig:exp_info_layers}. The first and the last figures show the information trajectory of the 1-layer/4-layer version of MLP-Large (784-10/784-512-100-80-10) where clear two-phase transitions happen in all these MLPs.

The 1-layer MLP is actually a softmax regression model. It can be identified that this model fits and compresses very slowly w.r.t. IIW compared with deeper NNs. This phenomenon demonstrates that deep models have overwhelmingly learning capacity than shallow models, because deep layers can not only boost the memorization of data but also urges the model to compress the redundant information to gain better generalization ability. Furthermore, when we add more layers, the fitting phase becomes shorter. Specifically, we observe the incidence of overfitting at the end of the 4-layer MLP training as IIW starts to increase.

We also examine how IIW explains the generalization w.r.t. the number of hidden units, a.k.a. the width of NNs, by Fig. \ref{fig:exp_units}. We train a 2-layer MLP without any regularization on MNIST. The left panel shows the training and test errors for this experiment. Notably, the difference of test and train acc can be seen an indicator of the \textbf{generalization gap} in Eq. \eqref{eq:gen_gap}. IIW should be aligned to this gap by definition. While 32 units are (nearly) enough to interpolate the training set, more hidden units still achieve better generalization performance, which illustrates the effect of overparameterization. In this scenario, weights $\ell_2$-norm keeps increasing with more units while IIW decays, similar to the test error. We identify that more hidden units do not render much increase of IIW, which is contrast to the intuition that wider NNs always have larger information complexity. More importantly, we find IIW is consistent to the generalization gap on each width.

\begin{figure}[t]
\centering
\begin{minipage}[t]{0.47\textwidth}
\begin{flushleft}
\includegraphics[width=0.99\linewidth]{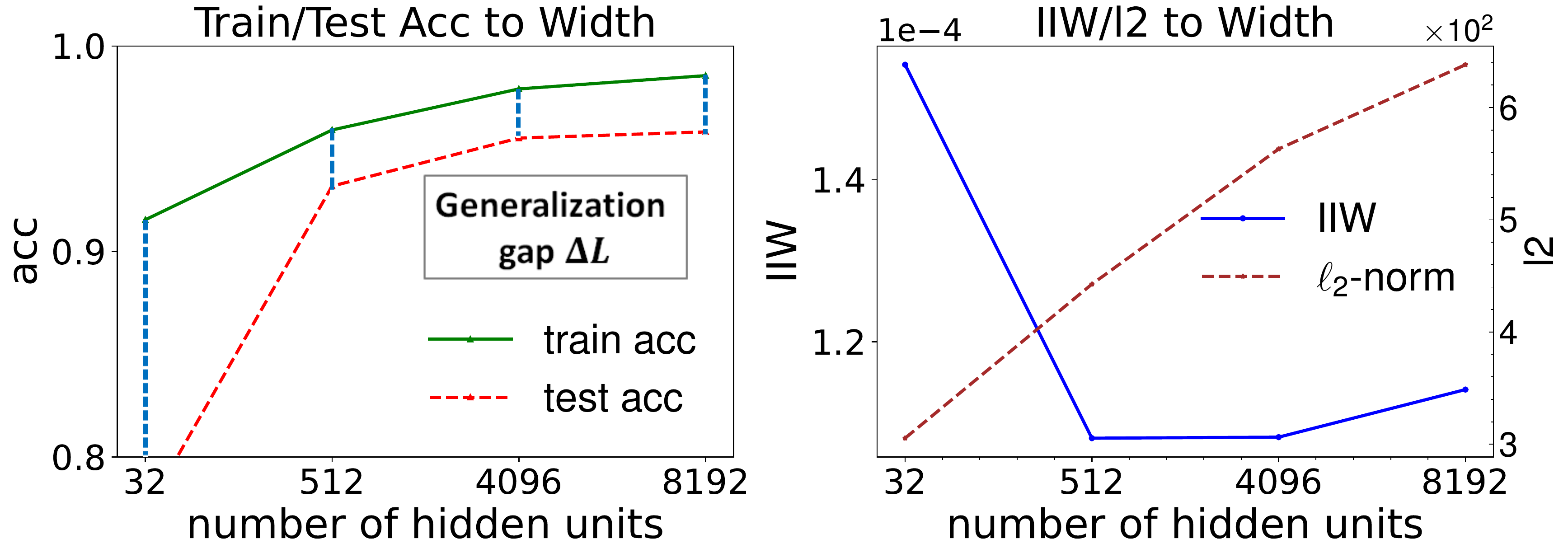}
\caption{\textbf{Left}: Training and test accuracy w.r.t. \# units; \textbf{Right}:  Complexity measure (IIW and $\ell_2$-norm)  w.r.t. \# units. Blue dash line shows the gap between train/test acc (generalization gap). We find $\ell_2$-norm keeps increasing with more hidden units. Instead, IIW keeps pace with the generalization gap: the larger the gap, the larger the IIW. \label{fig:exp_units}}
\end{flushleft}
\end{minipage}
\hspace{.3em}
\begin{minipage}[t]{0.48\textwidth}
\begin{flushright}
\includegraphics[width=0.99\linewidth]{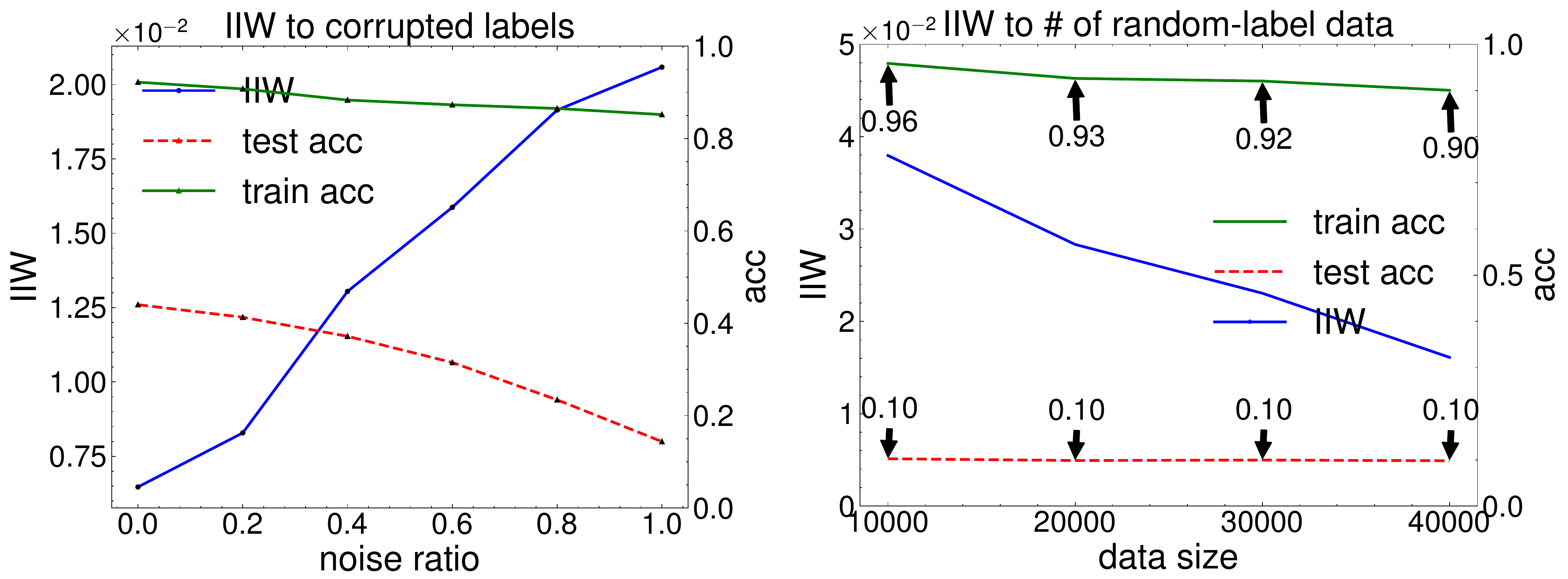}
\caption{\textbf{Left}: IIW, train, and test accuracy when noise ratio in labels changes. IIW rises when noise ratio grows. \textbf{Right}: IIW with varying size of random-label data. Test acc keeps constant while train acc decays. Hence, more data causes lower IIW because of the shrinking gap between the train and test accuracy. \label{fig:exp_random_true_labels}}
\end{flushright}
\end{minipage}
\vspace{-1.0em}
\end{figure}

\begin{figure}[t]
\centering
\begin{minipage}[t]{0.47\textwidth}
\begin{flushleft}
\includegraphics[width=0.99\linewidth]{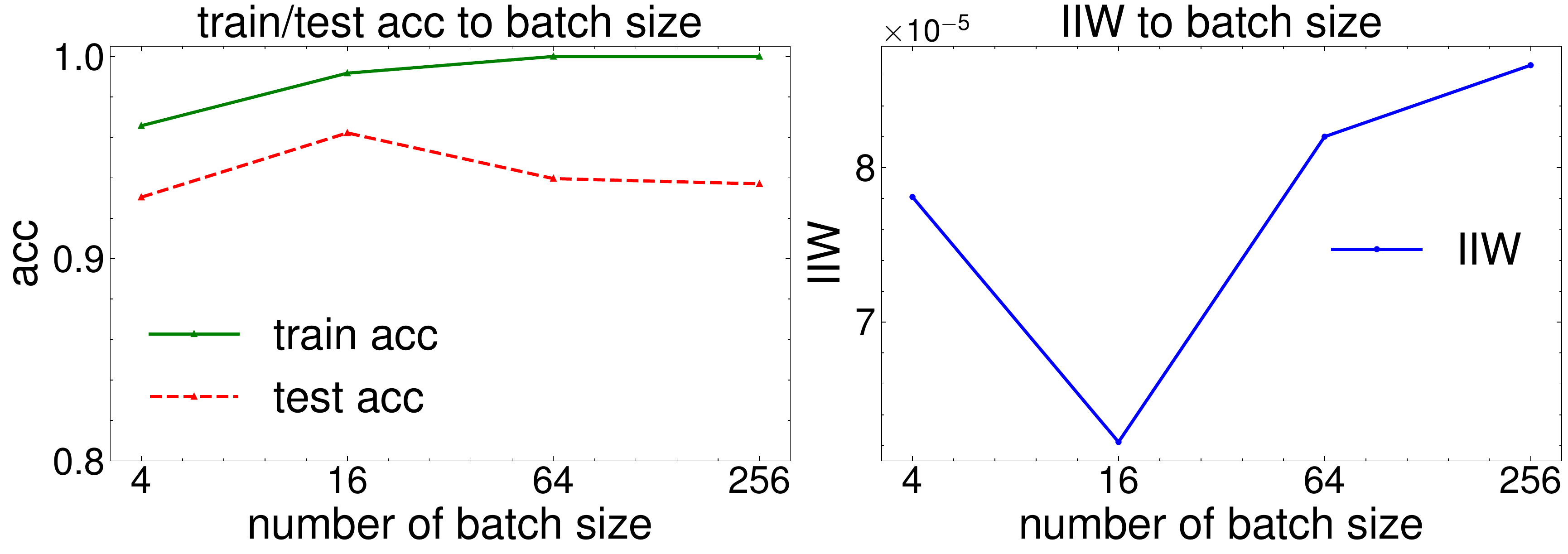}
\caption{\textbf{Left}: Training and test accuracy w.r.t. \# batch size; \textbf{Right}: IIW w.r.t. \# batch size. We find IIW keeps pace with the generalization gap: the larger the gap, the larger the IIW. From IIW we can specify that 16 is the best which reaches the least generalization gap without the need of having the model tested. \label{fig:exp_info_batch_size}}
\end{flushleft}
\end{minipage}
\hspace{.3em}
\begin{minipage}[t]{0.48\textwidth}
\begin{center}
\includegraphics[width=0.52\linewidth]{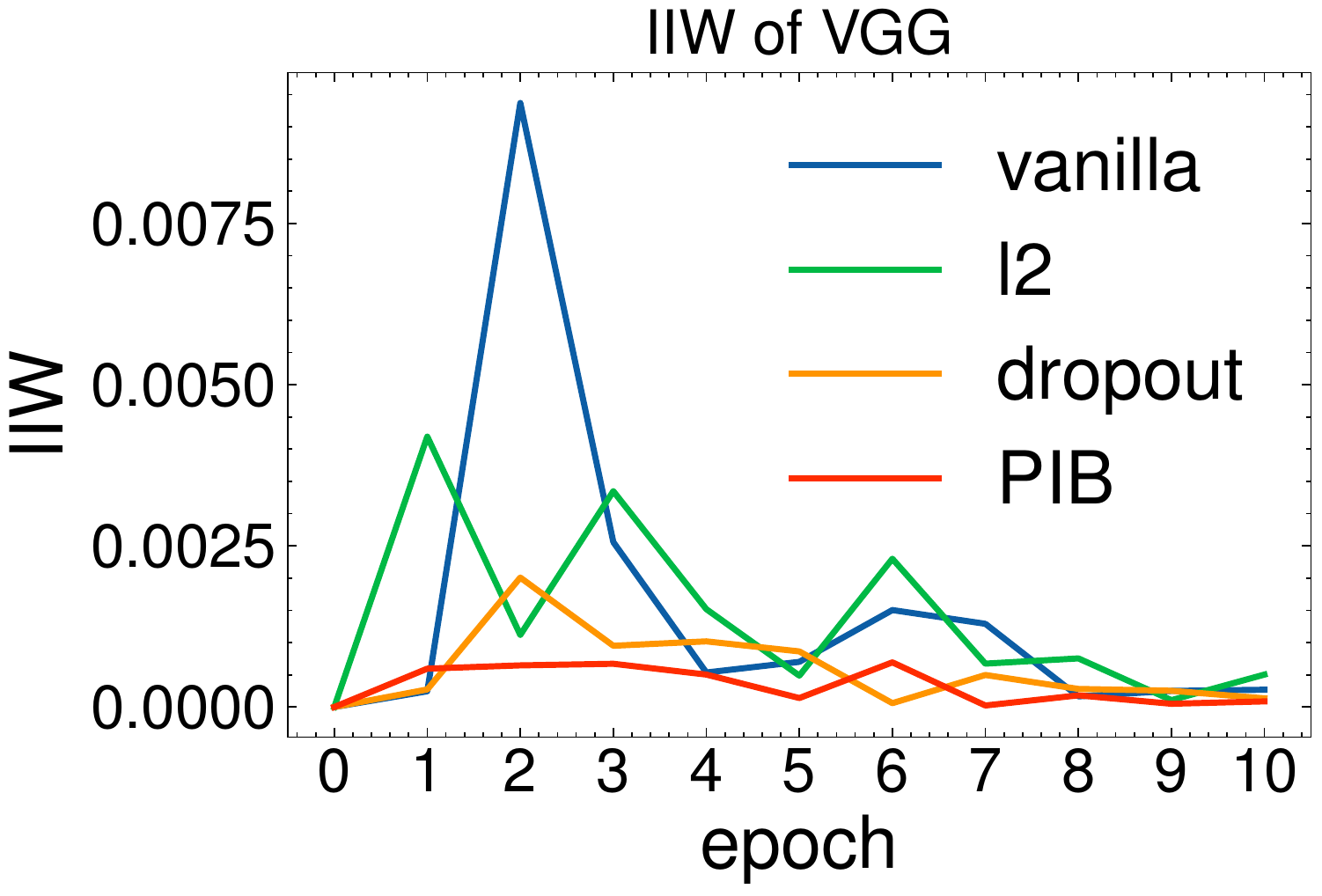}
\caption{The tracked IIW of the VGG net during the training by four ways: vanilla, $\ell_2$-norm regularization, dropout, and PIB training. We can identify that: first, all of them still follow a fitting-compressing paradigm specified by IIW; second, vanilla VGG reaches the largest IIW far above the others; third, PIB regularizes IIW directly thus yielding the smallest IIW. \label{fig:iiw_vgg}}
\end{center}
\end{minipage}
\vspace{-1.5em}
\end{figure}



\subsection{Random Labels vs. True Labels}\label{sec:exp_random_label}
According to the PAC-Bayes theorem, the IIW is a promising measure to explain/predict the generalization capability of NNs. NNs are often over-parameterized thus can perfectly fit even the random labels, obviously without any generalization capability. For example, 2-layer MLP has $15,728,640$ parameters that are much larger than the sample number of CIFAR-10 (50,000). That causes the number of parameters an unreliable measure of NN complexity in overparameterization settings \citep{neyshabur2015search}. Alternatively, $\ell_2$-norm is often used as a complexity measure to be imposed on regularizing model training in practices.

We investigate the model trained with different levels of label corruption, as shown by the left panel of Fig. \ref{fig:exp_random_true_labels}. We train a 2-layer MLP on CIFAR-10 and find that the increasing noise causes sharp test acc decay while train acc does not change much. Meanwhile, IIW keeps growing with the fall of test acc and expansion of generalization gap. This demonstrates IIW's potential in identifying the noise degree in datasets or the mismatch between the test and train data distributions.

We further build a random-label CIFAR-10, results are on the right of Fig. \ref{fig:exp_random_true_labels}. Although the model can still nearly interpolate the train data, with the rise of random-label data, the model keeps 10\% test acc but has less train acc, which renders larger generalization gap. This phenomenon is also captured by IIW.

\subsection{Information in weights w.r.t. Batch Size} \label{sec:exp_info_batch_size}
We also consider how batch size influences IIW and generalization gap. Recent efforts on bounding $I(\bw;S)$ of iterative algorithms (e.g., SGD and SGLD) \citep{mou2018generalization,pensia2018generalization} imply that the variance of gradients is a crucial factor.  For the ultimate case where batch size equals full sample size, the gradient variance is zero, and the model is prone to over-fitting grounded on empirical observations. When batch size equals one, the variance becomes tremendously large. 


We conjecture there is an optimal batch size that reaches the minimum generalization gap, in other word, the minimum IIW. This conjecture is raised on our empirical findings, displayed in Fig. \ref{fig:exp_info_batch_size} where we test IIW on model with varying batch size. Each model is updated with the same total number of iterations and the same learning rate. We identify that when batch size is 16, the model reaches the best test acc and the least generalization gap, which means this optimal batch size should fall into (4,16) or (16, 64). On the left, the model reaches the minimum IIW when batch size is 16. 


\subsection{Bayesian Inference with Varying Energy Functions}\label{sec:exp_energy_function}
To confirm the superiority of the proposed PIB in \S \ref{sec:bayes_inference_pib}, we compare it with vanilla SGD and  two widely used regularizations: $\ell_2$-norm and dropout. We train a large VGG network \citep{simonyan2014very} on four open datasets: CIFAR10/100 \citep{krizhevsky2009learning}, STL10 \citep{coates2011analysis}, and SVHN \citep{netzer2011reading}, as shown in Table \ref{tab:pib_performance}, where we find PIB consistantly outperforms the baselines. We credit the improvement to the explicit consideration of information regularization during the training, which forces the model to \emph{forget} the training dataset to regularize the generalization gap. This is verified by Fig. \ref{fig:iiw_vgg} where PIB helps restrict IIW in the lowest level. Please refer to Appendix \ref{appx:exp_protocol} for experimental setups. 

\begin{table}[t]
  \centering
  \caption{Test performance of the proposed PIB algorithm compared with two other common regularization techniques: $\ell_2$-norm and dropout, on VGG-net \citep{simonyan2014very}. The 95\% confidence intervals are shown in parentheses. Best values are in bold.}
 \renewcommand\arraystretch{1.2}
    \begin{tabular}{|c|c|c|c|c|}
    \hline
    \textbf{Test ACC (\%)} & \textbf{CIFAR10} & \textbf{CIFAR100} & \textbf{STL10} & \textbf{SVHN} \\
\hline
    vanilla SGD & 77.03(0.57) & 52.07(0.44) & 54.31(0.65) & 93.57(0.67) \\
    SGD+$\ell_2$-norm & 77.13(0.53) & 50.84(0.71) & 55.30(0.68) & 93.60(0.68) \\
    SGD+dropout & 78.95(0.60) & 52.34(0.66) & 56.35(0.78) & 93.61(0.76) \\
    SGD+PIB & \textbf{80.19(0.42)} & \textbf{56.47(0.62)} & \textbf{58.83(0.75)} & \textbf{93.88(0.88)}\\
\hline
\end{tabular}%
  \label{tab:pib_performance}%
  \vspace{-1em}
\end{table}%

\subsection{Summary of Experiments}\label{sec:exp_summary}
We made the following observations from our experiments:
\begin{enumerate}[leftmargin=*, itemsep=0pt, labelsep=5pt]
    \item We can clearly identify the fitting-compression phase transition during training through our new information measure, i.e., information stored in weights (IIW). Unlike the representation-based information measure $I(X;Z)$, IIW applies to various activation functions including \texttt{ReLU}, \texttt{sigmoid}, \texttt{tanh}, and \texttt{linear}.
    \item We further identify that the phase transition applies to deeper and wider architecture. More importantly, deeper model is proved to reach faster fitting and compression than shallow models.
    \item Unlike $\ell_2$-norm of weights that rise together with wider models, IIW better illustrates the true model complexity and its generalization gap.
    \item IIW can explain the performance drop w.r.t. the degree of label noise. Also, IIW can even identify the generalization gap for models learned from random labels.
    \item There might exist an optimal batch size for the minimum generalization gap, which is empirically demonstrated by our experiments.
    \item Adopting SGD based on the energy function derived from PAC-Bayes IB enables good inference to the optimal posterior of NNs. This works for practical large networks in the literature.
\end{enumerate}

\section{Conclusion} \label{sec:exp_chap_summary}
In this paper, we proposed PAC-Bayes information bottleneck and the corresponding algorithm for measuring information stored in weights of NNs and training NNs with information principled regularization. Empirical results show the universality of our information measure on explaining NNs, which sheds light on understanding NNs through information bottleneck.  We aim to further investigate its performance and develop this into practical NNs for production in the future.

{\small
\bibliography{iclr2022_conference}
\bibliographystyle{iclr2022_conference}
}

\appendix

\setcounter{table}{0}
\setcounter{theorem}{0}
\setcounter{lemma}{0}
\setcounter{hypothesis}{0}
\setcounter{equation}{0}
\setcounter{proposition}{0}
\setcounter{figure}{0}
\numberwithin{equation}{section}
\numberwithin{lemma}{section}
\numberwithin{definition}{section}
\numberwithin{figure}{section}

\newpage
\section{Proof of Lemma \ref{lemma:oracle_covariance}} \label{appx:proof_4_1}
Before the proof of Lemma \ref{lemma:oracle_covariance}, we need to introduce a lemma by \citet{martens2020new} as:
\begin{lemma}[Approximation of Hessian Matrix in NNs \citep{martens2020new}] \label{lemma:approx_hessian_by_fisher}
The Hessian matrix of NNs on a local minimum $\hat{\btheta}$ can be decomposed based on Fisher information matrix as
\begin{equation}
    \bH_{\hat{\btheta}} = \bF_{\hat{\btheta}} + \frac1n \sum_{i=1}^n \sum_{c=1}^C [\nabla_{\hat{\by}}\ell_i(\hat{\btheta})]_{c} \bH_{[f]_c},
\end{equation}
where $C$ is the total number of classes, $\hat{\by}$ is the output (or prediction) of the network given input $\bx$, and $\bH_{[f]_c}$ is the Hessian of the $c$-th component of $\hat{\by}$. Specifically, for a well-trained NN, we could have $\nabla_{\hat{\by}}\ell_i(\hat{\btheta}) \simeq 0$ thus $\bH_{\hat{\btheta}} \simeq  \bF_{\hat{\btheta}}$.
\end{lemma}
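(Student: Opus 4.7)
The plan is to derive the decomposition by applying the chain rule twice to the per-sample loss $\ell_i(\btheta) = \ell(f^{\btheta}(X_i), Y_i)$, which depends on $\btheta$ only through the output vector $\hat{\by}_i = f^{\btheta}(X_i) \in \mathbb{R}^C$. First I would compute the first derivative $\nabla_{\btheta}\ell_i = J_i^\top \nabla_{\hat{\by}}\ell_i$, where $J_i = \partial f^{\btheta}(X_i)/\partial \btheta \in \mathbb{R}^{C\times D}$ is the parameter-to-output Jacobian. Differentiating once more and using the product rule yields the standard Gauss--Newton split
\begin{equation*}
\nabla^2_{\btheta}\ell_i(\hat{\btheta}) \;=\; J_i^\top \bigl[\nabla^2_{\hat{\by}}\ell_i(\hat{\btheta})\bigr] J_i \;+\; \sum_{c=1}^C [\nabla_{\hat{\by}}\ell_i(\hat{\btheta})]_c \,\bH_{[f]_c},
\end{equation*}
the first term collecting curvature of the loss in output space pulled back through the Jacobian, and the second collecting the intrinsic curvature of each output coordinate weighted by its output-space gradient. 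Averaging over $i \in \{1,\dots,n\}$ then writes the empirical Hessian $\bH_{\hat{\btheta}}$ as the sum of these two pieces.

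Next I would identify the first summand with $\bF_{\hat{\btheta}}$. For losses of negative-log-likelihood form, which covers the softmax cross-entropy setting used throughout the paper, the classical score identity $\mathbb{E}[\nabla^2 (-\log p)] = \mathbb{E}[\nabla (-\log p)\,\nabla (-\log p)^{\top}]$ lets us rewrite the output-space Hessian as the outer product of score vectors in expectation. Applying this identity sample-wise and pulling the Jacobians through produces $\frac1n \sum_i J_i^\top \nabla^2_{\hat{\by}}\ell_i J_i = \bF_{\hat{\btheta}}$, which is exactly the Fisher information matrix of the model in parameter space. Substituting back into the averaged chain-rule identity gives the stated equality of Lemma~\ref{lemma:approx_hessian_by_fisher}.

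For the concluding approximation $\bH_{\hat{\btheta}} \simeq \bF_{\hat{\btheta}}$, I would argue directly from the residual term: at a local minimum where the network fits the data well, each per-sample output-space gradient $\nabla_{\hat{\by}}\ell_i(\hat{\btheta})$ is close to zero, so every coordinate $[\nabla_{\hat{\by}}\ell_i]_c$ that multiplies $\bH_{[f]_c}$ is small, and the curvature-weighted remainder drops out term by term, leaving only $\bF_{\hat{\btheta}}$. The main obstacle is the Gauss--Newton identification step: it is clean for losses derived from an exponential-family output model (softmax cross-entropy, Gaussian regression with log-likelihood), but does not go through verbatim for arbitrary losses such as squared error against fixed one-hot targets, where one only recovers the \emph{generalized} Gauss--Newton approximation rather than an exact equality. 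I would therefore state the derivation within the softmax/cross-entropy setting the paper actually uses and cite \citet{martens2020new} for the fully general treatment.
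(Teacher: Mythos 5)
The paper does not actually prove this lemma; it is imported verbatim from \citet{martens2020new} and used as a black box in the proof of Lemma~\ref{lemma:oracle_covariance}. Your proposal therefore supplies a derivation where the paper offers only a citation. The derivation you give is the standard one and it does match what Martens does: differentiate $\ell_i = \ell(f^{\btheta}(X_i),Y_i)$ twice through the output $\hat{\by}_i$, obtaining the generalized Gauss--Newton split $\nabla^2_{\btheta}\ell_i = J_i^\top \nabla^2_{\hat{\by}}\ell_i\, J_i + \sum_c [\nabla_{\hat{\by}}\ell_i]_c\,\bH_{[f]_c}$, average over samples, and identify the first block with the Fisher matrix for negative-log-likelihood losses. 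The concluding step ($\nabla_{\hat{\by}}\ell_i \simeq 0$ near a minimizer kills the residual) is also exactly the justification the cited source gives.

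One point worth tightening: the identity $\frac1n\sum_i J_i^\top \nabla^2_{\hat{\by}}\ell_i\, J_i = \bF_{\hat{\btheta}}$ is exact only for the \emph{true} (model-averaged) Fisher, since the score identity $\mathbb{E}_{y\sim p(\cdot|x)}[\nabla^2(-\log p)] = \mathbb{E}_{y\sim p(\cdot|x)}[\nabla(-\log p)\nabla(-\log p)^\top]$ involves an expectation over the model's label distribution. For softmax cross-entropy the output-space Hessian $\nabla^2_{\hat{\by}}\ell_i = \diag(p_i) - p_i p_i^\top$ is indeed independent of the observed label, so the GGN equals the true Fisher exactly. But the quantity the paper's Algorithm~\ref{alg:2} and the proof of Lemma~\ref{lemma:oracle_covariance} actually compute is the \emph{empirical} Fisher $\frac1n\sum_i \nabla_{\btheta}\ell_i\nabla_{\btheta}\ell_i^\top$ built from observed labels, which is only approximately equal to the true Fisher (and hence to the GGN) near a well-fit optimum. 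Your caveat about exponential-family losses is correct as far as it goes but does not isolate this empirical-vs-true Fisher gap, which is a second, distinct approximation silently layered on top of the $\nabla_{\hat{\by}}\ell_i\simeq 0$ one. Flagging both would make the derivation fully airtight.
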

We further introduce a lemma of Poisson bootstrapping as 
\begin{lemma}[Poisson Bootstrapping \citep{efron1992bootstrap,chamandy2012estimating}] \label{lemma:poisson}
Given an infinite number of samples, the bootstrap resampling weight $\xi$ has the property that $\lim_{n\to \infty} \text{Binomial}\left(n,\frac1n \right) = \text{Poisson}(1)$. This approximation becomes precise in practice when $n \geq 100$. Also, we know $\mathbb{E}[\xi_i] = 1$ and $\text{Var}[\xi_i]=1$ by the definition of Poisson distribution when $n$ is large enough.
\end{lemma}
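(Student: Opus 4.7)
}
The plan is to establish three separate facts: (i) the distributional convergence $\text{Binomial}(n,1/n) \Rightarrow \text{Poisson}(1)$ as $n\to\infty$, (ii) the mean identity $\mathbb{E}[\xi_i]=1$, and (iii) the variance identity $\text{Var}[\xi_i]=1$. The cleanest route for (i) is through probability generating functions (PGFs), since both distributions have a tractable closed form, and convergence of PGFs on an open interval around $z=1$ implies convergence in distribution for $\mathbb{N}$-valued random variables. The strategy avoids the combinatorial bookkeeping of manipulating $\binom{n}{k}(1/n)^k(1-1/n)^{n-k}$ directly and factors the argument through a single analytic limit.

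First, I would recall that if $X_n \sim \text{Binomial}(n,p_n)$ then its PGF is $G_{X_n}(z)=\mathbb{E}[z^{X_n}]=(1-p_n+p_n z)^n$. Specializing to $p_n=1/n$ gives $G_{X_n}(z) = \bigl(1 + (z-1)/n\bigr)^n$. Taking $n\to\infty$ and invoking the standard limit $\lim_{n\to\infty}(1+a/n)^n = e^{a}$ yields $G_{X_n}(z) \to e^{z-1}$, which is exactly the PGF of $\text{Poisson}(1)$. By the continuity theorem for PGFs, this pointwise convergence on a neighborhood of $z=1$ implies $X_n \Rightarrow \text{Poisson}(1)$, establishing (i). As a sanity check, one can verify the same conclusion by directly computing $\lim_{n\to\infty}\binom{n}{k}(1/n)^k(1-1/n)^{n-k} = e^{-1}/k!$ for each fixed $k$, using $\binom{n}{k}/n^k \to 1/k!$ and $(1-1/n)^{n-k}\to e^{-1}$.

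For (ii) and (iii), I would simply invoke the standard moments of the Poisson distribution: if $\xi\sim\text{Poisson}(\lambda)$ then $\mathbb{E}[\xi]=\lambda$ and $\text{Var}[\xi]=\lambda$, which can be read off by differentiating the PGF $e^{\lambda(z-1)}$ once and twice at $z=1$ (giving the first factorial moment $\lambda$ and second factorial moment $\lambda^2$, hence variance $\lambda$). Setting $\lambda=1$ yields both claimed moments. A brief remark would justify the ``approximation becomes precise when $n\geq 100$'' heuristic by bounding the total variation distance, e.g., via Le Cam's inequality $d_{\text{TV}}(\text{Binomial}(n,1/n),\text{Poisson}(1))\le n(1/n)^2 = 1/n$, so the error is already below $0.01$ at $n=100$.

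The main observation is that there is no real obstacle here: the lemma is a textbook fact stated in the appendix for the reader's convenience, and the entire proof fits in a few lines once the PGF machinery is in place. The only judgment call is how much expository detail to include about why PGF convergence implies distributional convergence on $\mathbb{N}$, and whether to include the Le Cam bound to justify the finite-sample comment; I would keep both short and cite standard probability textbooks for the underlying theorems rather than re-derive them.
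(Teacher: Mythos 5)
The paper states this lemma only as a citation to \citet{efron1992bootstrap} and \citet{chamandy2012estimating} and provides no proof of its own, so there is nothing in the paper to compare your argument against; the surrounding text merely motivates the binomial model (each sample retained with probability $1/n$, so $\xi_i\sim\text{Binomial}(n,1/n)$ marginally and $\bxi$ is multinomial jointly) and then moves on. Your proof is correct and is the standard textbook route: the probability generating function $\bigl(1+(z-1)/n\bigr)^n\to e^{z-1}$ together with the continuity theorem for $\mathbb{N}$-valued random variables gives $\text{Binomial}(n,1/n)\Rightarrow\text{Poisson}(1)$; the factorial moments read off from $e^{z-1}$ give $\mathbb{E}[\xi_i]=\text{Var}[\xi_i]=1$; and the Le~Cam/Barbour--Hall bound $d_{\text{TV}}\le\sum_i p_i^2=1/n$ supplies a quantitative justification of the paper's unexplained ``$n\ge 100$'' remark. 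One caveat worth a sentence, even though it is not strictly part of this lemma's statement: the true bootstrap weights are $\text{Multinomial}(n;1/n,\dots,1/n)$ and therefore jointly dependent, with $\text{Cov}(\xi_i,\xi_j)=-1/n$ for $i\neq j$, while Poissonization replaces them by \emph{independent} $\text{Poisson}(1)$ variables. The paper's downstream proof of Lemma~\ref{lemma:oracle_covariance} uses $\mathbb{E}[\xi_i\xi_j]=\mathbb{E}[\xi_i]\mathbb{E}[\xi_j]=1$ for $i\neq j$, which is exact only in the Poissonized model and holds only up to an $O(1/n)$ error in the genuine multinomial one; your PGF argument establishes the marginal convergence cleanly but does not by itself address this joint approximation, and flagging that distinction would make the appendix more self-contained.
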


When bootstrap resampling from dataset $S$, each individual sample $Z_i=(X_i,Y_i)$ has a probability of $\frac1n$ being picked, causing the weight $\xi_i$ to follow a binomial distribution as $\xi_i \sim \text{Binomial}\left(n,\frac1n\right)$. As a result, all weights $\bxi$ follow a multinomial distribution as $\bxi \sim \text{Multinomial}\left(n, \frac1n,\frac1n,\dots, \frac1n \right)$ with the total number of samples constrained to be $n$. When it comes to big data, i.e., $n$ is prohibitively large, this multinomial resampling thus becomes rather slow. Based on Lemma \ref{lemma:poisson}, we can now start to prove our lemma of approximating oracle prior covariance.

\begin{lemma}[Approximation of Oracle Prior Covariance] \label{lemma:oracle_covariance}
Given the definition of influence functions (Lemma \ref{lemma:influence_function}) and Poisson bootstrapping (Lemma \ref{lemma:poisson}), the covariance matrix of the oracle prior can be approximated by
\begin{equation}
    \bSigma_0 =  \mathbb{E}_{p(S)}\left[(\btheta_{S}- \btheta_0)(\btheta_{S} - \btheta_0)^{\top}\right]  \simeq \frac1K \sum_{k=1}^K \left(\hat{\btheta}_{\bxi^k}- \hat{\btheta}\right) \left(\hat{\btheta}_{\bxi^k} - \hat{\btheta}\right)^{\top} 
     \simeq \frac1n  \bH_{\hat{\btheta}}^{-1} \bF_{\hat{\btheta}} \bH_{\hat{\btheta}}^{-1} \simeq \frac1n \bF_{\hat{\btheta}}^{-1},
\end{equation}
where $\bF_{\hat{\btheta}}$ is Fisher information matrix (FIM); we omit the subscript $S$ of $\hat{\btheta}_S$ and $\hat{\btheta}_{S,\bxi}$ for notation conciseness, and $\bxi^k$ is the bootstrap resampling weight in the $k$-th experiment.
\end{lemma}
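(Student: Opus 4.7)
The plan is to chain three approximations, one per $\simeq$ in the claim, using the machinery already assembled: bootstrap Monte Carlo for the outer expectation, the influence-function expansion (Lemma \ref{lemma:influence_function}) together with Poisson bootstrap (Lemma \ref{lemma:poisson}) for the middle step, and the Fisher--Hessian identification (Lemma \ref{lemma:approx_hessian_by_fisher}) to collapse the sandwich $\bH^{-1}\bF\bH^{-1}$ to $\bF^{-1}$.

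For the first $\simeq$, I would replace the true data distribution $p(S)$ by an empirical bootstrap: $K$ independent resamples $S_k$ from $S$ give parameters $\btheta_{S_k}$ that can be written as reweighted ERM solutions $\hat{\btheta}_{\bxi^k}$ with Poisson weights $\bxi^k$ (Lemma \ref{lemma:poisson}), and the prior mean $\btheta_0 = \mathbb{E}_{p(S)}[\btheta_S]$ is approximated by the fixed $\hat{\btheta}$ fitted on $S$. This is just the standard Monte Carlo / bootstrap substitution and is justified up to $O(1/\sqrt{K})$ plus the usual bootstrap-versus-population discrepancy.

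For the second $\simeq$, I would plug the perturbed influence-function identity $\hat{\btheta}_{\bxi} - \hat{\btheta} \simeq \tfrac1n \bPsi^{\top}(\bxi - \1)$ into the bootstrap covariance, so that
\begin{equation*}
\frac1K \sum_{k=1}^K (\hat{\btheta}_{\bxi^k} - \hat{\btheta})(\hat{\btheta}_{\bxi^k} - \hat{\btheta})^{\top} \simeq \frac{1}{n^2} \bPsi^{\top} \Bigl[\frac1K \sum_{k=1}^K (\bxi^k - \1)(\bxi^k - \1)^{\top}\Bigr] \bPsi.
\end{equation*}
By Lemma \ref{lemma:poisson} the components of $\bxi$ are asymptotically i.i.d.\ Poisson$(1)$, so the bracketed empirical covariance concentrates on $\bI_n$. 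Substituting $\bpsi_j = -\bH_{\hat{\btheta}}^{-1}\nabla_{\btheta}\ell_j(\hat{\btheta})$ and using symmetry of $\bH$, one obtains $\tfrac{1}{n^2} \bPsi^{\top}\bPsi = \bH_{\hat{\btheta}}^{-1}\bigl[\tfrac{1}{n^2}\sum_j \nabla\ell_j \nabla\ell_j^{\top}\bigr]\bH_{\hat{\btheta}}^{-1}$, and recognizing $\tfrac1n \sum_j \nabla\ell_j \nabla\ell_j^{\top}$ as the empirical Fisher $\bF_{\hat{\btheta}}$ yields the sandwich form $\tfrac{1}{n}\bH_{\hat{\btheta}}^{-1}\bF_{\hat{\btheta}}\bH_{\hat{\btheta}}^{-1}$.

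The final $\simeq$ is immediate from Lemma \ref{lemma:approx_hessian_by_fisher}: at a well-trained minimum, $\nabla_{\hat{\by}}\ell_i(\hat{\btheta}) \approx 0$, so $\bH_{\hat{\btheta}} \approx \bF_{\hat{\btheta}}$, and the sandwich collapses to $\tfrac{1}{n}\bF_{\hat{\btheta}}^{-1}$. The step I expect to be the delicate one is the middle approximation, specifically (i) justifying the first-order influence-function expansion uniformly over bootstrap samples (since Poisson weights can occasionally be large and the true map $\bxi \mapsto \hat{\btheta}_{\bxi}$ is nonlinear), and (ii) replacing the empirical gradient outer-product with the true Fisher, both of which are the usual concessions behind empirical-Fisher-as-Hessian approximations and which I would state as controlled by higher-order remainders rather than bounded explicitly.
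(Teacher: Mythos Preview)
Your proposal is correct and follows essentially the same three-step route as the paper: bootstrap Monte Carlo for the outer expectation, the influence-function expansion combined with the Poisson second moments to obtain the sandwich $\tfrac{1}{n}\bH_{\hat{\btheta}}^{-1}\bF_{\hat{\btheta}}\bH_{\hat{\btheta}}^{-1}$, and Lemma~\ref{lemma:approx_hessian_by_fisher} to collapse it to $\tfrac{1}{n}\bF_{\hat{\btheta}}^{-1}$. The only cosmetic difference is that the paper computes $\mathbb{E}[\bxi\bxi^{\top}]=\1\1^{\top}+\bI_n$ and then invokes the first-order identity $\bPsi^{\top}\1=\sum_j\bpsi_j=-\bH_{\hat{\btheta}}^{-1}\sum_j\nabla\ell_j(\hat{\btheta})=0$ (valid at the ERM minimum) to kill the rank-one term, whereas you go straight to $\mathrm{Cov}(\bxi)=\bI_n$ under the Poisson assumption; both yield the same $\bPsi^{\top}\bPsi$.
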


\begin{proof}
Recall that in the $k$-th bootstrap resampling process, the loss function is reweighted by $\bxi_k=(\xi_{k,1},\xi_{k,2},\dots,\xi_{k,n})^{\top}$. Also, we have an influence matrix $\bPsi = (\bpsi_1,\bpsi_2,\dots,\bpsi_n)^{\top} \in \mathbb{R}^{n \times D}$. The original risk minimizer on the full dataset $S$ is
\begin{equation}
    \hat{\btheta}_S \triangleq \argmin_{\btheta} \frac1n \sum_{i=1}^n \ell_i(\btheta),
\end{equation}
and the reweighted empirical risk minimizer (after bootstrapping) is defined by
\begin{equation}
    \hat{\btheta}_{S,\bxi} \triangleq \argmin_{\btheta} \frac1n \sum_{i=1}^n \xi_i \ell_i(\btheta),
\end{equation}
where we omit the subscript $k$ for the sake of conciseness. Given the definition of influence function from Lemma \ref{lemma:influence_function}, the difference between the two risk minizers above, $\hat{\btheta}_S$ and $ \hat{\btheta}_{S,\bxi}$ , can be written as
\begin{equation}
     \hat{\btheta}_{S,\bxi} -  \hat{\btheta}_{S} \simeq \frac1n \sum_{i=1} (\xi_i - 1) \bpsi_i = \frac1n \bPsi^{\top}(\bxi - \1).
\end{equation}
As a result, the oracle prior can be transformed as
\begin{align}
    \bSigma_0 & \simeq \mathbb{E}_{p(S)}\left[( \hat{\btheta}_{S,\bxi} -  \hat{\btheta}_{S})(\hat{\btheta}_{S,\bxi} -  \hat{\btheta}_{S})^{\top} \right] \\
    & \simeq \mathbb{E}_{p(S)}\left[\left(\frac1n \bPsi^{\top}(\bxi - \1)\right)\left(\frac1n \bPsi^{\top}(\bxi - \1)\right)^{\top} \right] \\
    & = \frac1{n^2} \mathbb{E}_{p(S)}\left[\bPsi^{\top}(\bxi - \1)(\bxi - \1)^{\top}\bPsi \right]. \label{eq:appx_b_7}
\end{align}
Furthermore, based on the definition of influence function, we know $\bPsi^{\top} \1 = \1^{\top} \bPsi = 0$. The term in Eq. \eqref{eq:appx_b_7} can be further writen to
\begin{equation}
        \bSigma_0 \simeq \frac1{n^2} \mathbb{E}_{p(S)}\left[\bPsi^{\top}(\bxi - \1)(\bxi - \1)^{\top}\bPsi \right] =  \frac1{n^2} \bPsi^{\top} \mathbb{E}_{p(S)}[\bxi \bxi^{\top}] \bPsi.
\end{equation}
From Lemma \ref{lemma:poisson} we know $\mathbb{E}[\xi_i] = 1$ and $\text{Var}[\xi_i] = 1$ when $n \geq 100$.  We also know that 
\begin{align*}
\begin{split}
\mathbb{E}_{p(S)}[\xi_i \xi_j]= \left \{
\begin{array}{ll}
   \mathbb{E}_{p(S)}[\xi_i]\mathbb{E}_{p(S)}[\xi_j]=1 ,                    & i \neq j,\\
    \mathbb{E}_{p(S)}[\xi_i^2] = \text{Var}[\xi_i] + \mathbb{E}^2[\xi_i] = 2,     & i = j.
\end{array}
\right.
\end{split}
\end{align*}
This gives rise to the final solution that
\begin{align}
    \bPsi^{\top} \mathbb{E}_{p(S)}[\bxi \bxi^{\top}] \bPsi & = \bPsi^{\top} \left(\1\1^{\top} + \mathbf{I}_n \right) \bPsi \label{eq:appx_b_9} \\
    & = \sum_{i=1}^n \bpsi_i \bpsi_i^{\top} \label{eq:appx_b_10} \\ 
    & =  \sum_{i=1}^n \bH_{\hat{\btheta}}^{-1} \nabla_{\btheta} \ell_i(\hat{\btheta}) \ell^{\top}_i(\hat{\btheta}) \bH_{\hat{\btheta}}^{-1} \\
    & = n \bH_{\hat{\btheta}}^{-1} \left[\frac1n \sum_{i=1}^n \nabla_{\btheta} \ell_i(\hat{\btheta}) \nabla_{\btheta}\ell^{\top}_i(\hat{\btheta})\right] \bH_{\hat{\btheta}}^{-1} \\
    & = n \bH_{\hat{\btheta}}^{-1} \bF_{\hat{\btheta}} \bH_{\hat{\btheta}}^{-1} \label{eq:appx_b_13} \\
    & \simeq n \bF_{\hat{\btheta}}^{-1}. \label{eq:appx_b_14}
 \end{align}
$\mathbf{I}_n$ in Eq. \eqref{eq:appx_b_9} is an identity matrix with size $n \times n$; Eq. \eqref{eq:appx_b_10} is true by re-applying the property of influence functions that $\bPsi^{\top} \1 = \1^{\top} \bPsi = 0$; and Eq. \eqref{eq:appx_b_14} is achieved by the result from Lemma \ref{lemma:approx_hessian_by_fisher}. Summarizing all the above results, the oracle prior covariance can be approximated by
\begin{equation}
    \bSigma_0 \simeq \frac1{n^2} \left( n \bF_{\hat{\btheta}}^{-1} \right) = \frac1n \bF_{\hat{\btheta}}^{-1}.
\end{equation}
\end{proof}

\section{Proof of Lemma \ref{lemma:optimal_posterior}} \label{appx:proof_4_2}
\begin{lemma}[Optimal Posterior for PAC-Bayes Information Bottleneck]
Given an observed dataset $S^*$, the optimal posterior $p(\bw|S^*)$ of PAC-Bayes IB in Eq. \eqref{eq:pac_bayes_ib} should satisfy the following form that
\begin{equation}
    p(\bw|S^*) = \frac1{Z(S)} p(\bw) \exp \left\{-\frac{1}{\beta} \hat{L}_{S^*}(\bw) \right\} =  \frac1{Z(S)} \exp \left\{-\frac{1}{\beta} \left( U_{S^*}(\bw) \right) \right\},
\end{equation}
where $U_{S^*}(\bw)$ is the energy function defined by
\begin{equation}
    U_{S^*}(\bw) = \hat{L}_{S^*}(\bw) - \beta \log p(\bw),
\end{equation}
and $Z(S)$ is the normalizing constant.
\end{lemma}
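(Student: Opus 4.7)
The plan is to reduce the PAC-Bayes IB objective, for the observed dataset $S^*$, to a standard Gibbs variational problem whose unique minimizer is a Boltzmann distribution. First I would specialize Eq.~\eqref{eq:pac_bayes_ib} to $S^*$: because $I(\bw;S) = \mathbb{E}_{p(S)}[\text{KL}(p(\bw|S)\parallel p(\bw))]$ is an expectation over $p(S)$ and the conditional density $p(\bw|S)$ may be chosen independently for each realization $S$, the minimization decouples pointwise in $S$. The task therefore reduces to minimizing, over probability densities $q$ on $\mathbb{R}^D$,
\begin{equation*}
\mathcal{L}(q) \;=\; \int q(\bw)\,\hat{L}_{S^*}(\bw)\,d\bw \;+\; \beta \int q(\bw)\log\frac{q(\bw)}{p(\bw)}\,d\bw,
\end{equation*}
subject to $\int q(\bw)\,d\bw = 1$ and $q \ll p$.

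The key step is a ``completing-the-KL'' identity. Define the candidate
\begin{equation*}
q^*(\bw) \;=\; \frac{1}{Z(S^*)}\,p(\bw)\exp\!\left\{-\tfrac{1}{\beta}\hat{L}_{S^*}(\bw)\right\},\qquad Z(S^*) \;=\; \int p(\bw)\exp\!\left\{-\tfrac{1}{\beta}\hat{L}_{S^*}(\bw)\right\}d\bw,
\end{equation*}
so that $\log p(\bw) = \log q^*(\bw) + \log Z(S^*) + \tfrac{1}{\beta}\hat{L}_{S^*}(\bw)$. Substituting this identity into the KL term of $\mathcal{L}(q)$ causes the $\hat{L}_{S^*}$ contributions to cancel exactly, yielding
\begin{equation*}
\mathcal{L}(q) \;=\; \beta\,\text{KL}(q\parallel q^*) \;-\; \beta \log Z(S^*).
\end{equation*}
Since $-\beta\log Z(S^*)$ is independent of $q$ and Gibbs' inequality gives $\text{KL}(q\parallel q^*) \geq 0$ with equality iff $q = q^*$ almost everywhere, $q^*$ is the unique minimizer. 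Reading off its definition matches the claim with energy $U_{S^*}(\bw) = \hat{L}_{S^*}(\bw) - \beta \log p(\bw)$.

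As a sanity check, the same density can be recovered by calculus of variations: introducing a Lagrange multiplier $\lambda$ for the normalization constraint, differentiating $\mathcal{L}(q) + \lambda\bigl(\int q\,d\bw - 1\bigr)$ with respect to $q(\bw)$ and setting the result to zero gives $\hat{L}_{S^*}(\bw) + \beta(\log q(\bw) - \log p(\bw) + 1) + \lambda = 0$, which solves to the same exponential form once all $\bw$-independent factors are absorbed into $Z(S^*)$. The main obstacle is conceptual bookkeeping rather than analysis: one must justify stripping the outer $\mathbb{E}_{p(S)}$ (pointwise decoupling of $p(\bw|S)$), ensure the prior has full support so that $\text{KL}(q\parallel p)$ is meaningful, and verify $p$-integrability of $\exp\{-\tfrac{1}{\beta}\hat{L}_{S^*}\}$ so that $Z(S^*)<\infty$. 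All three hold under the bounded/clipped-loss assumption already invoked for the PAC-Bayes bound in Eq.~\eqref{eq:gen_gap}, after which the result is essentially an instance of the Donsker--Varadhan variational representation of KL divergence.
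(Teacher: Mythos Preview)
Your argument is correct. The paper takes the calculus-of-variations route that you mention only as a sanity check: it writes the Lagrangian with the normalization constraint, differentiates in $p(\bw|S^*)$, sets the first variation to zero, and reads off the exponential form. Your primary argument is a genuinely different and somewhat cleaner route: you ``complete the KL'' to obtain $\mathcal{L}(q)=\beta\,\text{KL}(q\parallel q^*)-\beta\log Z(S^*)$, which is precisely the Donsker--Varadhan variational identity for KL. This buys you two things the paper's proof omits: an explicit proof that the stationary point is the \emph{global} minimizer (and is unique), and the value of the optimum, $-\beta\log Z(S^*)$. You also make explicit the pointwise decoupling over $S$ and the integrability/support conditions needed for $Z(S^*)<\infty$; the paper leaves these implicit. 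The paper's approach, on the other hand, is the textbook constrained-optimization derivation and requires no auxiliary construction of $q^*$ in advance.
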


\begin{proof}
Recap that the PAC-Bayes information bottleneck in Eq. \eqref{eq:pac_bayes_ib} is
\begin{equation}
\min_{p(\bw|S)} \mathcal{L}_{\text{PIB}} = L_S(\bw) + \beta I(\bw;S).
\end{equation}
Given an observed dataset $S^*$, our object of interest is to find the optimal posterior $p(\bw|S^*)$ that minimizes the $\mathcal{L}_{\text{PIB}}$. Consider a constraint of posterior distribution that 
\begin{equation}
    \int p(\bw|S) d\bw = 1, \ \forall S \sim p(X,Y)^{\otimes n},
\end{equation}
we can formulate the problem by
\begin{equation}
\begin{split}
    \min_{p(\bw|S)} \mathcal{L}_{\text{PIB}} = L_S(\bw) + \beta I(
    \bw;S), \\
    \text{s.t.} \ \int p(\bw|S) d\bw = 1.
\end{split}
\end{equation}
A Lagrangian can hence be built to relax the above optimization problem by
\begin{equation}
    \begin{split}
    & \min_{p(\bw|S)} \widetilde{\mathcal{L}}_{\text{PIB}}  =  L_S(\bw) + \beta I(
    \bw;S) + \int \alpha_S \int \left(p(\bw|S) - 1 \right) d \bw d S \\
    & = \int p(\bw|S) \left[\hat{L}_S(\bw) \right] d \bw + \beta \int p(\bw,S) \left[\log p(\bw|S)- \log p(\bw) \right] d\bw dS \\ & + \int \alpha_S \int \left(p(\bw|S) - 1 \right) d \bw d S ,
    \end{split}
\end{equation}
with $\mathbf{\alpha} = \left \{\alpha_S | \forall S \sim p(X,Y)^{\otimes n} \right \}$ corresponding to Lagrange multipliers; we denote the empirical risk by $\hat{L}_S(\bw) = \frac1n \sum_{i=1}^n \ell_i(\bw)$.

Differentiating $\widetilde{\mathcal{L}}_{\text{PIB}}$ w.r.t. $p(\bw|S^*)$ results in
\begin{equation}
    \nabla_{p(\bw|S^*)} \widetilde{\mathcal{L}}_{\text{PIB}} = \hat{L}_{S^*}(\bw) + \beta \log p(\bw|S^*) - \beta \log p (\bw) + \beta + \alpha_{S^*}.
\end{equation}
Setting $ \nabla_{p(\bw|S^*)} \widetilde{\mathcal{L}}_{\text{PIB}}=0$ and solving for $p(\bw|S^*)$ yields
\begin{equation}
\begin{split}
    \log p(\bw|S^*) &= -\frac1\beta \hat{L}_{S^*}(\bw) + \log p(\bw) - 1 - \frac{\alpha_{S^*}}{\beta} \\
    p(\bw|S^*) &= p(\bw) \exp \left \{ -\frac1\beta \hat{L}_{S^*}(\bw) \right \} \exp \left \{ -1 - \frac{\alpha_{S^*}}{\beta} \right \}.
\end{split}
\end{equation}
The second exponential term $\exp \left \{ -1 - \frac{\alpha_{S^*}}{\beta} \right \}$ is the partition function that normalizes the posterior distribution. Denoting the normalization term as $Z(S)$, we hence obtain the optimal posterior solution as
\begin{equation}
\begin{split}
    p(\bw|S^*) & = \frac1{Z(S)} p(\bw) \exp \left \{ -\frac1\beta \hat{L}_{S^*}(\bw) \right \} \\
    & = \frac1{Z(S)} \exp \left \{-\frac1{\beta} \left [ \hat{L}_{S^*}(\bw) - \beta \log p(\bw) \right] \right \}.
\end{split}
\end{equation}
\end{proof}


\section{Computation for PIB object} \label{appx:computation}
Our Alg. \ref{alg:3} presents the training process based on the proposed PIB objective function. It differs from vanilla SGD on two aspects: (1) the objective function (also called energy function $U$) consists of the negative log-likelihood plus a regularization term $\log p(\bw)$ (line 2); (2) the parameter $\bw$ gets update by the gradient of the energy function plus an isotropic Gaussian noise (line 3). Since (2) has no additional computational cost, the major change is the regularization term $\log p(\bw)$ in (1), described in Eq. \eqref{eq:prior_solution} as
\begin{equation}
        - \log p(\bw) \propto (\bw - \btheta_0)^{\top} \bSigma_0^{-1} (\bw - \btheta_0) + \log(\det \bSigma_0).
\end{equation}
The first term is just matrix-vector product based on the approximation of $\bSigma_0$ in Eq. \eqref{eq:prior_cov_fisher}. And the second term can be written by
\begin{equation}
    \log (\det \bSigma_0) = \sum_{i=1}^D \log \lambda_i,
\end{equation}
where $\lambda_i$ is the eigenvalue of $\Sigma_0$, which can be obtained by efficient eigen-decomposition techniques.

\section{Experimental Protocol} \label{appx:exp_protocol}
All experiments are conducted on MNIST \citep{lecun1998gradient} or CIFAR-10 \citep{krizhevsky2009learning}. We design two multi-layer perceptron (MLPs): MLP-Small and MLP-Large, where MLP-Small is a two-layer NN, i.e., 784(3072)-512-10, and MLP-Large is a five-layer NN, i.e., 784(3072)-100-80-60-40-10. The number of input units is 784 with permutation MNIST inputs or 3072 with permutation CIFAR-10 inputs. For the performance comparison in \S \ref{sec:exp_energy_function}, we use a reduced version of VGG-Net where two blocks are cut due to the memory constraint. In the general setting, we pick Adam optimizer \citep{kingma2014adam} to accelerate the convergence of NN training. We use one RTX 3070 GPU for all experiments.

Specifically for the Bayesian inference experiment, the batch size is picked within $\{8, 16,32,64,128,256,512\}$; learning rate is in $\{1e^{-4},1e^{-3}, 1e^{-2}, 1e^{-1}\}$; weight decay of $\ell_2$-norm is in $\{1e^{-3},1e^{-4},1e^{-5},1e^{-6}\}$; noise scale of SGLD is in $\{1e^{-4},1e^{-6},1e^{-8},1e^{-10}\}$; $\beta$ of PAC-Bayes IB is in $\{1e^{-1}, 1e^{-2}, 1e^{-3}\}$; and the dropout rate is fixed as $0.1$ for the dropout regularization. 

\end{document}